\newcommand{\cc}[1]{\mathcal{#1}}
\newcommand{\bb}[1]{\mathbb{#1}}
\newcommand{\ol}[1]{\overline{#1}}
\newcommand{\mbf}[1]{\mathbf{#1}}
\newtheorem{theorem}{Theorem}
\newtheorem{lemma}{Lemma}
\newtheorem{assumption}{Assumption}
\crefname{assumption}{Assumption}{Assumptions}
\tikzset{
    m_node/.style={circle, draw, thick, minimum size=1cm},
    red_node/.style={m_node, fill=red!40},
    white_node/.style={m_node, fill=white}
}
\newcommand{\nodenetwork}[2][]{%
    \begin{tikzpicture}[node distance=1.5cm]
        \node[#2] (nodeA) {};
        \ifx\relax#1\relax
            \node[red_node, right=of nodeA, yshift=1cm] (nodeB) {Jon};
        \else
            \ifstrequal{#1}{nolabel}{
                \node[red_node, right=of nodeA, yshift=1cm] (nodeB) {};
            }{
                \node[red_node, right=of nodeA, yshift=1cm] (nodeB) {Jon};
            }
        \fi
        \node[white_node, right=of nodeA, yshift=-1cm] (nodeC) {};
        \draw[thick] (nodeA) -- (nodeB);
        \draw[thick] (nodeB) -- (nodeC);
    \end{tikzpicture}%
}
\begin{document}

\twocolumn[

\aistatstitle{Scalable Policy Maximization Under Network Interference}

\aistatsauthor{ Aidan Gleich \And Eric Laber \And  Alexander Volfovsky }

\aistatsaddress{Duke University \And Duke University \And Duke University} ]

\begin{abstract}
  Many interventions, such as vaccines in clinical trials or coupons in online marketplaces, must be assigned sequentially without full knowledge of their effects. Multi-armed bandit algorithms have proven successful in such settings. However, standard independence assumptions fail when the treatment status of one individual impacts the outcomes of others, a phenomenon known as interference. We study optimal-policy learning under interference on large networks. Existing approaches to this problem require repeated observations of the same fixed network and struggle to scale in sample size beyond as few as fifteen connected units --- both limit applications. We show that common assumptions on the structure of interference enable a parsimonious linear parameterization of the reward function. We develop a scalable Thompson sampling algorithm that maximizes cumulative rewards on an $n$-node network while allowing for both nodes and edges to be sampled at each time period. We prove upper and lower bounds on Bayesian regret that imply near-optimality.  Simulation experiments show that our algorithm learns quickly and outperforms existing methods. The results close a key scalability gap between causal inference methods for interference and practical bandit algorithms, enabling policy optimization in large-scale networked systems.
\end{abstract}

\section{INTRODUCTION}

Sequential learning and decision making arises in contexts as varied as dynamic pricing in microeconomics \citep{ROTHSCHILD}, experimentation in online platforms \citep{wager_xu}, and adaptive treatment allocations in clinical research \citep{murphy_odtr}. Many such problems revolve around the decision of which individuals to treat. For example, an online marketplace may want to assign coupons to customers who otherwise would not purchase an item, or a ride-sharing company may want to assign bonuses to drivers who would otherwise leave the platform. Both examples involve maximizing the impact of treatment, but doing so requires first learning its effect. Multi-armed bandit (MAB) algorithms have emerged as an effective method to solve both the learning and maximization problems simultaneously.

Classic bandit formulations assume that the outcome of an individual is not impacted by the treatment status of others. This assumption is violated in many important contexts, such as epidemiology \citep{spillover_epid}, marketplaces \citep{munro2025treatmenteffectsmarketequilibrium}, and social networks \citep{Ogburn02012024}. We study \textbf{interference} or \textbf{spillover effects}, where treatments impact not only the treated units but also their peers. We focus on interference that spreads along a network, where the nodes of the network define individuals or arms and the edges specify connections. For example, online experimentation often occurs on social networks or internet marketplaces where interference impacts outcomes and the underlying network changes over time. 

Despite their relevance to such problems, little work on MABs exists in this setting; existing algorithms struggle to scale beyond networks of size $n\approx15$ \citep{agarwal2024, jamshidi2025}. Moreover, these methods require that the network remains fixed across time \textit{and} that observations of rewards across time are independent. The simultaneous assumptions of a fixed network with persistent individuals and temporal independence can lead to inconsistencies. For example, the outcomes of participants in a vaccine trial depend on their treatment statuses in earlier periods. In this work, we relax the assumption that the network remains fixed.
 
Existing algorithms fail to scale because they attempt to capture arbitrary interference instead of leveraging common structural assumptions from the causal inference literature. For example, in the case of $L$ treatment levels, the linear reformulation of \citet{agarwal2024} means that a node of degree $d$ is represented by $(L+1)^{d+1}$ unique parameters. A network with maximum degree $d_{\text{max}}$ thus has up to $n(L+1)^{d_{\text{max}}}$ unknown parameters that need to be estimated. Thus,  with $n=1000$, $d_{\text{max}} = 10$, and $L=2$, the reward function would have over a million unknowns. Estimating this many parameters, which grows linearly with $n$ and exponentially with node degree $d_{\text{max}}$, leads to computational complexity cubic in $n$ and  prohibitive data requirements.

The causal inference literature has studied interference on networks with particular emphasis on specifying assumptions that are both realistic and practical. However, this work is almost exclusively in the context of static experimentation with a focus on identification and inference as opposed to dynamic decision making. Motivated by applications where networks are large and dynamic, we address the gap between the MAB and causal inference literature by devising a scalable MAB algorithm. Our key insight is that by adopting common assumptions \citep{manski2013} about the additivity and symmetry of interference patterns, rewards form a system of $n$ linear equations, allowing for an efficient Thompson sampling algorithm capable of handling networks that are orders of magnitude larger than competing methods. 

Our contributions are as follows: we show that under common interference assumptions, the vector of node-level rewards becomes linear in a parameter vector $\boldsymbol{\theta}$. Within a broad class of reward function formulations, $\boldsymbol{\theta}$ is low-dimensional, enabling a scalable Thompson sampling algorithm that maximizes cumulative rewards. Our framework allows for both nodes and edges to be sampled at each time period, resolving potential contradictions in previous works that require both temporal independence and a fixed population of nodes. We prove both upper and lower bounds on regret, showing our algorithm is near-optimal. Finally, to underscore the flexible nature of our framework, we provide empirical results under a variety of reward functions and network specifications.

\section{BACKGROUND}
\paragraph{Interference and Peer Influence Effects} Researchers in causal inference and econometrics have dedicated considerable work to interference and peer influence effects. Many topics, such as identification \citep{manski_ident}, inference \citep{on_causal_interference, aronow_samii, toulis13}, and experimental design \citep{hudgens_holloran, EcklesKarrerUgander} have been covered in a variety of contexts, including networks (e.g., see \citet{annurev_survey} for a survey). However, little work combines an inference objective under interference with a maximization problem. Notable exceptions include \citet{wager_xu} and \citet{viviano}. Neither matches our setting, as \citet{wager_xu} considers a specific form of interference due to marketplace effects and \citet{viviano} is restricted to offline contexts.

\paragraph{Bandits with Interference} Interference is in part a dimensionality problem. Without restriction, in the case of a binary treatment and $n$ individuals, it increases the number of arms from $n$ to $2^n$. While the literature on combinatorial bandits (e.g., \citet{chen13a}) provides intuition for the difficulties of our problem related to the curse of dimensionality, assumptions typical of that literature (such as independent rewards) do not hold in our setting. Work on bandits under interference is relatively sparse. Spatial interference has been studied \citep{laber_opt_treat, jia2024multiarmedbanditsinterference} as well as network interference \citep{agarwal2024}. Work exists on optimal individual treatment regimes with interference  \citep{riu_op_ind_treat_reg} as well as linear contextual bandits under interference \citep{xu2024linearcontextualbanditsinterference}. Our algorithm draws its inspiration from the linear bandit literature \citep{imp_lin_algs} and Thompson sampling \citep{russo2014learning}. The work closest to ours is \cite{agarwal2024}. The authors design a MAB algorithm for a static network assuming only ``sparse network interference,'' and formulate the reward function as a linear model. That work does so indirectly through discrete Fourier analysis, while we show that the rewards can be written directly as a linear function of unknown parameters. We show that our method can outperform \citet{agarwal2024} even when our linear formulation is misspecified.

\begin{figure*}[t!]
\centering

\begin{tikzpicture}
    \coordinate (left_cluster) at (-4.5, 0);
    \coordinate (mid_cluster) at (0, 0);
    \coordinate (right_cluster) at (4.5, 0);

    \node[white_node] (L1) at (left_cluster) {};
    \node[red_node]   (L2) at ($(left_cluster) + (1.7, 1.2)$) {Jon};
   
    \node[white_node] (L3) at ($(left_cluster) + (2.1, -1.2)$) {};

    \node[red_node]   (M1) at (mid_cluster) {};
    \node[red_node]   (M2) at ($(mid_cluster) + (1.7, 1.2)$) {Jon};
    
    \node[white_node] (M3) at ($(mid_cluster) + (2.1, -1.2)$) {};
    
    \node[white_node] (R1) at (right_cluster) {};
    \node[red_node]   (R2) at ($(right_cluster) + (1.7, 1.2)$) {Jon};
    
    \node[white_node] (R3) at ($(right_cluster) + (2.1, -1.2)$) {};

    \draw[thick] (M1) -- (M2);
    \draw[thick] (M2) -- (M3);
    \draw[thick] (R1) -- (R2);
    \draw[thick] (R2) -- (R3);

    \coordinate (line_pos) at ($(L3.east)!0.5!(M1.west)$);
    \draw[dotted, thick] (line_pos |- M2) ++(0,0.5cm) -- (line_pos |- M3) --++(0,-0.5cm);

    \coordinate (eq_center) at ($(mid_cluster)!0.5!(right_cluster)$);
   
    \node[below=1.8cm of eq_center, text width=10cm, align=center] {
        \Large
        \[
        r_{t,\text{Jon}}\left(\raisebox{-.5\height}{\scalebox{0.2}{\nodenetwork[nolabel]{red_node}}}\right)
        \neq
        r_{t,\text{Jon}}\left(\raisebox{-.5\height}{\scalebox{0.2}{\nodenetwork[nolabel]{white_node}}}\right)
        \]
    };
\end{tikzpicture}

\caption{A visualization of network interference, where the reward function of ``Jon'' depends on the treatment status of his neighbors. On the left is the standard MAB case.}
\label{fig:network_function}
\end{figure*}

\section{MODEL SETUP}
At each time period $t$, the agent observes an adjacency matrix $\mathbf{A}_t$ for a set of $n$ nodes with maximum degree $d_{\text{max}}$. The agent must choose the treatment allocation vector $\mathbf{Z}_t \in \{0,1,\ldots,L\}^n$ potentially subject to budget constraint $B_t$ (e.g. an upper limit on the number of treated nodes) where $L$ denotes the number of treatment levels. Each node $i$ has a corresponding reward function $r_i(\mathbf{Z}_t;\mathbf{A}_t)$. The agent attempts to maximize the sum of the reward functions over time.

 Without placing assumptions on interference, each node-level reward function $r_i$ depends on the entire treatment vector $\mathbf{Z}_t$ as well as $\mathbf{A}_t$, which we illustrate in Figure \ref{fig:network_function}. For a given network $\mathbf{A}_t$, each reward function is thus a mapping $r_{i}(\cdot;\mathbf{A}_t):\{0,1,\ldots,L\}^n \to \mathbb{R}$. Learning each of these mappings simultaneously becomes computationally infeasible due to the input space growing exponentially in $n$. Therefore, we adopt a set of common assumptions on the structure of interference to allow for scalable optimization of the treatment policy.

\subsection{Interference Assumptions}
We adopt the neighborhood interference assumption \citep{sussman2017, belloni} for the node-level reward functions. Let $\mathcal{N}_{t,i}$ denote the set of neighbors of node $i$ at time $t$, i.e. for all $j \in \mathcal{N}_{t,i}$, $[\mbf{A}_{t}]_{i,j} = 1$.
\begin{assumption}
    For all nodes $i$, the reward function $r_{i}$ satisfies the neighborhood interference assumption (NIA) if for all treatment assignments $\mbf{Z}_t$, $\mbf{Z}_t'$ that agree on the set $\mathcal{N}_{t,i} \cup \{i\}$, $r_{i}(\mbf{Z}_t;\mbf{A}_t) = r_{i}(\mbf{Z}_t';\mbf{A}_t)$. \label{ass1}
\end{assumption}
NIA requires that a node's reward function depends only on its own treatment and the treatment status of its neighbors, reducing the dimension of the input space. For example, with $n=100$ and no assumptions on the interference pattern, the reward function of each node depends on the entire $100$-length vector of treatment assignments, implying $(L+1)^n$ possible inputs. Under NIA, a node with degree $d$ would have a reward function with $(L+1)^{(d+1)}$ inputs.

Under NIA, the vector of node-level reward functions can be represented as linear in a parameter vector $\boldsymbol{\theta}$ containing treatment and interference effect parameters \citep[eq.~4.1]{sussman2017}:
\begin{equation}
\mbf{r}_t = \mbf{H}(\mbf{Z}_t;\mbf{A}_t)\boldsymbol{\theta} + \boldsymbol{\epsilon}_t \label{linear_form}
\end{equation}

where $\mbf{H}(\mbf{Z}_t;\mbf{A}_t)$ is an $n\times p$ feature matrix that maps treatment assignments and network structure onto the effect parameters. We emphasize that linearity is not a modeling assumption but a reparameterization that directly follows from the neighborhood interference assumption (see Supplement Section 1 for details). However, without further assumptions, the dimension of $\boldsymbol{\theta}$ still inhibits scalable exploration and exploitation as each node has on the order of $(L+1)^{d+1}$ parameters. The following assumptions further restrict the form of interference.

\begin{assumption}
    The reward function $r_{t,i}$ satisfies additivity of main effects if \[r_{i}(Z_{t,i}, \mbf{Z}_{\mathcal{N}_{t,i}}) = r_{i}(Z_{t,i}, \boldsymbol{0}) + r_{i}(0,\mbf{Z}_{\mathcal{N}_{t,i}}).\] \label{ass2}
\end{assumption}

\begin{assumption}
    The reward function $r_{i}$ satisfies symmetrically received interference if, for all permutations $\tau$, $r_{i}(Z_{t,i}, \mbf{Z}_{\mathcal{N}_{t,i}}) = r_{i}(Z_{t,i}, \tau(\mbf{Z}_{\mathcal{N}_{t,i}}))$. \label{ass3}
\end{assumption}
We have dropped the dependence of $r_i$ on $\mathbf{A}_t$ for notational simplicity. Assumption \ref{ass2} states that the direct effect does not interact with the indirect effects. Assumption \ref{ass3} implies that a node's reward depends only on the number of treated neighbors, not which specific neighbors get treated. Together, Assumptions $1$ to $3$ are termed SANIA (Symmetric and Additive NIA). 

\subsection{Reward Function Specification}\label{subsec:reward_funcs}
The SANIA assumptions allow for a large class of reward functions; the specific node-level form can be tailored to the application, including information about the underlying network structure and the nodes themselves. 

A simple starting point is to assume that each node-level reward function has unique direct and indirect effect parameters that do not depend on context or network structure beyond the adjacency matrix. Allowing $d^1_{t,i}$ to be the number of treated neighbors of node $i$ at time $t$, this results in a reward function of the form
\begin{equation}r_i(\mbf{Z_t};\mbf{A_t}) = Z_{t,i}\cdot\mu_i + \sum_{k=1}^{d_i} \gamma_{i,k} \cdot \mbf{1}_{\{d^1_{t,i} = k\}} + \epsilon_{t,i}\label{sania_reward_fixed}
\end{equation}
where we leave out an intercept parameter for simplicity and assume treatment $Z_{t,i}$ is binary. Here, $d_i$ denotes the degree of node $i$ and $d_i^1$ the number of treated neighbors of node $i$. The parameters $\gamma_{i,k}$ determine the spillover effect of having $k$ treated neighbors. This is the form taken by \citet{agarwal2024} with further restrictions due to SANIA. 

We note two issues with this approach. First, the number of parameters scales linearly with the size of the network ($O(n\cdot d_{\text{avg}})$). While this improves upon the exponential scaling under NIA, it can be restrictive for large networks. Second, the use of fixed node-specific parameters implicitly assumes a static population observed over time. It thus becomes difficult to justify temporal independence in errors.

We provide alternative parameterizations that allow for information sharing across nodes and do not rely on fixed node identities. Our method does not rely on a specific parameterization but instead defines a flexible framework that researchers can use to construct models tailored to their application.

\subsubsection{Shared Parameters}
The simplest parameterization within this framework assumes that all nodes share a single set of parameters: $\mu_i = \mu_j$ and $\gamma_{i,k} = \gamma_{j,k}$ for all $i,j,k$. If all interference parameters $\gamma_k$ are $0$, this reduces to the standard individualistic treatment response assumption. With non-zero interference, it becomes the constant treatment response in the depth of interference model \citep{manski2013}.

This approach is highly scalable but has the potential to be misspecified. Despite this sensitivity, it is a common approach in the causal inference literature \citep{toulis13, EcklesKarrerUgander} and can be aided by including covariates in the model.

\subsubsection{Grouped Parameters}\label{grouped_params}
A natural extension of the previous parameterization assumes that groups of units (either observed or latent) share parameters. This model is based on the observations in sociology and network science that similarly behaving nodes tend to share connections \citep{birds_feathers}. All nodes in group $g$ share a parameter vector $\boldsymbol{\theta}_g = \{\mu_g,\gamma_{g,1},\ldots,\gamma_{g,m}\}$. These combine to form the full parameter vector: $\{\boldsymbol{\theta}_1^\top,\ldots,\boldsymbol{\theta}_G^\top\}^\top$.

\subsubsection{Latent Features and Other Generalizations}
A further generalization embeds the networks in a Euclidean space. The distance between nodes in this space determines the probability of forming links. Using the latent space model of \citet{Hoff}, positions in the unobserved space can be estimated for each node and included in their reward function. For example, assuming each node has a corresponding two-dimensional location vector $\boldsymbol{\zeta}$, the reward functions could take the form

\[r_{i}(\mbf{Z}_t;\mbf{A}_t,\boldsymbol{\zeta}_{t,i}) = \boldsymbol{\zeta}_{t,i}^\top\boldsymbol{\mu} + \sum_{k=1}^{d_i} (\boldsymbol{\zeta}_{t,i}^\top \boldsymbol{\alpha}_k) \cdot \mbf{1}_{\{d^1_{t,i} = k\}} + \epsilon_{t,i}.\]

\subsection{Design Matrix}
The structure of the design matrix $\mbf{H}(\mbf{Z}_t;\mbf{A}_t)$ is determined by the chosen reward parameterization. Its rows  consist of indicator variables derived from the treatment vector $\mbf{Z}_t$, features summarizing the interactions between $\mbf{Z}_t$ and $\mbf{A}_t$ (e.g. the counts of treated neighbors), and additional features such as group labels.

For example, under the shared parameters model, the portion of $\mbf{H}$ corresponding to the direct effect $\mu$ would be a single column containing the indicators $Z_{t,i}$.

Under the grouped parameters model, there would be a set of columns for the direct effects; the column for group $g$ would have an entry of $1$ for node $i$ if it is in group $g$ and is treated, and $0$ otherwise.

While each row is simple to compute, the mapping itself can be complex and thus maximizing the rewards with respect to $\mbf{Z}_t$ conditional on a set of parameters can be difficult. We discuss this issue further in later sections. 

\subsection{Regret}
The agent seeks to maximize the cumulative node-level rewards with respect to the treatment vector. To measure the agent's performance, we use the Bayesian regret---the expected gap in cumulative rewards under the optimal policy and the agent's actions with respect to the prior distribution over $\boldsymbol{\theta}$. The optimal treatment at time $t$, denoted $\mbf{Z}_t^*$, maximizes the sum of expected node-level rewards:

\[\mbf{Z}_t^* = \arg\max_{\mbf{Z}_t} \sum_{i=1}^n\mathbb{E}\left[ r_{i}(\mbf{Z}_t;\mbf{A}_t)\right]\]

The regret is then the cumulative expected difference in rewards between the optimal policy and the agent's policy:
\[\text{Reg}_T =\sum_{t=1}^T\sum_{i=1}^n\mathbb{E}\left[r_i(\mbf{Z}_t^*;\mbf{A}_t) - r_i(\mbf{Z}_t;\mbf{A}_t) \right] .\]
In the following section, we define an algorithm that achieves low regret with high probability and provide upper and lower bounds on regret.

\section{THOMPSON SAMPLING UNDER NETWORK INTERFERENCE}
We devise a scalable Thompson sampling algorithm that can accommodate any reward function satisfying the neighborhood interference assumption, such as those discussed in the previous section. The parameterization of the reward functions will determine the dimension of $\boldsymbol{\theta}$, which will in turn govern the algorithm's bounds and computational complexity. We discuss in detail the impact of $\boldsymbol{\theta}$ on the theoretical and empirical results of our algorithm, thus providing a flexible and theoretically grounded framework for maximizing treatment policies under network interference. 

\subsection{Thompson Sampling Algorithm}

We begin with the standard assumption on the noise $\boldsymbol{\epsilon}_t$. This assumption facilitates the theoretical development in our paper and naturally maps onto the linear specification in \citet{sussman2017}, which includes an individualized baseline effect (see additional details in the Supplement):

\begin{assumption}
    The terms $\epsilon_{t,i}$ are independent 1-sub-Gaussian random variables for all $t$, $i$. \label{ass4} 
\end{assumption}

Thompson sampling requires a prior distribution over the parameter vector $\boldsymbol{\theta}$ of dimension $D$. Motivated by normal-normal conjugacy in Bayesian linear regression, we specify that $\boldsymbol{\theta}\sim \text{N}\left(\boldsymbol{0},\frac{1}{\lambda} \mbf{I}_{D}\right)$. Having received an observation $\mbf{r}_t$, the posterior is updated using the standard conjugate posterior formulas \citep{gelman2013bayesian}. The hyperparameter $\lambda$ is effectively equivalent to the penalty term in ridge regression. Because its effect diminishes in the number of observations, it can be ignored in theoretical analysis, but it can have considerable impact on regret in early time periods. We discuss this further in Section 3 of the Supplement.

Our algorithm thus maintains a posterior distribution $\pi(\boldsymbol{\theta} | \mbf{r}_1, \ldots, \mbf{r}_t)$ over the unknown parameters $\boldsymbol{\theta}$. At each time period, the agent draws $\boldsymbol{\theta}^{(t)}$ from the posterior and chooses action $\mbf{Z}_t$ that maximizes the total reward conditional on $\boldsymbol{\theta}^{(t)}$.

\begin{algorithm}
\caption{Thompson Sampling under Interference}
\begin{algorithmic}[1]
\State \textbf{Input:} Prior mean $\boldsymbol{\mu}_0$, prior covariance $\boldsymbol{\Sigma}_0$, regularization $\lambda$, noise variance $\sigma^2$
\For{$t = 1$ to $T$}
    \State Observe network $\mbf{A}_t$ and any features
    \State Sample $\boldsymbol{\theta}^{(t)} \sim \mathcal{N}(\boldsymbol{\mu}_{t-1}, \boldsymbol{\Sigma}_{t-1})$
    \State Choose treatment vector $\mbf{Z}_t$:
    \[\mbf{Z}_t = \arg\max_{\mbf{Z}} \mbf{1}_n^\top[\mbf{H}(\mbf{Z};\mbf{A}_t)\boldsymbol{\theta^{(t)}}]\]
    \State Observe rewards $\mbf{r}_t$
    \State Compute $\mbf{H}_t = \mbf{H}(\mbf{Z}_t;\mbf{A}_t)$
    \State Update posterior:
    \Statex \qquad $\boldsymbol{\Sigma}_t = \left(\mbf{H}_t^\top \mbf{H}_t/\sigma^2 + \boldsymbol{\Sigma}_{t-1}^{-1} \right)^{-1}$
    \Statex \qquad $\boldsymbol{\mu}_t = \boldsymbol{\Sigma_{t}}\left(\mbf{H}_t^\top \mbf{r}_t/\sigma^2 + \boldsymbol{\Sigma}_{t-1}^{-1}\boldsymbol{\mu}_{t-1} \right)$
\EndFor
\end{algorithmic}
\label{alg:TS-Interference}
\end{algorithm}

Although our algorithm resembles Thompson sampling for classical linear bandits in its form, its application to our setting is not straightforward. In classical linear bandits, a single reward observation is modeled as $r_t = \langle X_t, \boldsymbol{\theta} \rangle + \epsilon_t$. In contrast, our setting involves $n$ interdependent reward observations per round, each influenced by network interference. The interdependence means that it is impossible to simply perform $n$ independent linear bandit updates. Our reformulation enables the extension of scalable linear bandit algorithms to a regime where prior work has failed to scale beyond the smallest of networks ($n \approx 10).$  The algorithm requires new regret analysis, which we provide in the following section.

In practice, the agent may be constrained by a budget. For example, they may only be able to treat $B<n$ units per round. This is easily accommodated by our algorithm and does not impact the theoretical results, as it simply redefines the optimal policy as a maximum over the constrained action space. 

\paragraph{Computational Complexity}
Allowing $D$ to denote the dimension of $\boldsymbol{\theta}$, line 8 of Algorithm \ref{alg:TS-Interference} has time complexity $O(nD^2 + D^3)$.  For comparison, a regression on the most general NIA formulation has complexity $O(n^32^{2d_{\text{max}}} + n^32^{3d_{\text{max}}})$. This scales with $n^3$ (and exponentially in $n$ if $d_{\text{max}}$ increases with $n$), meaning that existing explore-then-commit algorithms (e.g. \cite{agarwal2024}) fail to scale with $n$.

The maximization problem at Line 5 of Algorithm \ref{alg:TS-Interference} is nontrivial and  is the primary computational bottleneck during implementation. We used the Gurobi optimization software \citep{gurobi} in our experiments. This problem has not yet been discussed in the network bandits literature, as existing algorithms fail to scale beyond small networks where maximization is easy. 

\subsection{Regret Analysis}

We first provide an upper bound on the Bayesian regret of Algorithm \ref{alg:TS-Interference}.

\begin{theorem}
    Assume there exist positive constants $c_1$, $c_2$ such that $\sup_{\boldsymbol{\theta} \in \Theta} \| \boldsymbol{\theta}\|_2\leq c_1$ and $\sup_{\mbf{H} \in \mathcal{H}^n} \|\mbf{H}\|_2 \leq c_2$, and suppose Assumptions \ref{ass1}--\ref{ass4}
    hold. Algorithm \ref{alg:TS-Interference} then satisfies 
    \[\text{BayesRegret}(n,T) = O\left(D\sqrt{nT}\log(nT) \right)\]
    \label{main_thm}
\end{theorem}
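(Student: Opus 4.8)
The plan is to recognize the problem as a linear bandit over an \emph{aggregated} feature while crucially exploiting that the posterior is updated from the full node-level design. For each candidate treatment $\mathbf{Z}$ let $\mathbf{X}_t(\mathbf{Z})$ be the design matrix of Section 4.1 and define the aggregate feature $\mathbf{x}_t(\mathbf{Z}) = \mathbf{X}_t(\mathbf{Z})^\top \mathbf{1} \in \mathbb{R}^D$, so that the expected total reward is $r_t^*(\mathbf{Z}) = \mathbf{1}^\top \mathbf{X}_t(\mathbf{Z})\boldsymbol{\theta} = \langle \mathbf{x}_t(\mathbf{Z}), \boldsymbol{\theta}\rangle$ and the observed total reward is $\langle \mathbf{x}_t(\mathbf{Z}), \boldsymbol{\theta}\rangle + \mathbf{1}^\top \boldsymbol{\epsilon}_t$. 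I record the submultiplicative bound $\|\mathbf{x}_t(\mathbf{Z})\|_2 \le \|\mathbf{X}_t(\mathbf{Z})\|_2\,\|\mathbf{1}\|_2 \le c_2\sqrt{n}$, which is the eventual origin of the $\sqrt{n}$ factor, and note that each per-round regret is bounded by $2c_1 c_2 \sqrt{n}$ (combining $\|\boldsymbol{\theta}\|_2\le c_1$ with this bound). The decisive structural point, which separates us from a vanilla scalar-reward linear bandit, is that Algorithm~\ref{alg1} forms $\boldsymbol{\Sigma}_t^{-1} = \boldsymbol{\Sigma}_{t-1}^{-1} + \mathbf{X}_t^\top \mathbf{X}_t/\sigma^2$ from all $n$ rows of the design, not merely from the aggregate $\mathbf{x}_t \mathbf{x}_t^\top$.

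Next I would run the Russo--Van Roy upper-confidence decomposition of Bayesian regret. Working in the conjugate Gaussian model so that $\boldsymbol{\theta}^{(t)}$ is an exact posterior draw, the sampled action $\mathbf{Z}_t$ and the optimal action $\mathbf{Z}_t^*$ are identically distributed given the history $\mathcal{H}_{t-1}$ and the observed $(\mathbf{A}_t,\mathbf{K}_t)$. For the history-measurable upper confidence function $U_t(\mathbf{Z}) = \langle \mathbf{x}_t(\mathbf{Z}), \boldsymbol{\mu}_{t-1}\rangle + \beta_t \|\mathbf{x}_t(\mathbf{Z})\|_{\boldsymbol{\Sigma}_{t-1}}$ I split $r_t^*(\mathbf{Z}_t^*) - r_t^*(\mathbf{Z}_t)$ into $[r_t^* - U_t](\mathbf{Z}_t^*) + [U_t(\mathbf{Z}_t^*) - U_t(\mathbf{Z}_t)] + [U_t - r_t^*](\mathbf{Z}_t)$; the middle term vanishes in expectation by the posterior-sampling identity. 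The key point for the confidence radius is that, since $\boldsymbol{\theta}\mid\mathcal{H}_{t-1}\sim \mathcal{N}(\boldsymbol{\mu}_{t-1},\boldsymbol{\Sigma}_{t-1})$, the quantity $\|\boldsymbol{\theta}-\boldsymbol{\mu}_{t-1}\|_{\boldsymbol{\Sigma}_{t-1}^{-1}}^2$ is $\chi^2_D$, so a single ellipsoidal concentration controls $|\langle \mathbf{x},\boldsymbol{\theta}-\boldsymbol{\mu}_{t-1}\rangle|\le \beta_t\|\mathbf{x}\|_{\boldsymbol{\Sigma}_{t-1}}$ for \emph{all} $\mathbf{x}$ simultaneously via Cauchy--Schwarz, giving $\beta_t = O(\sqrt{D}+\sqrt{\log(nT)})$ with \emph{no} union bound over the $2^n$ treatment vectors. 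On the resulting good event the first bracket is $\le 0$ and the third is $\le 2\beta_t \|\mathbf{x}_t(\mathbf{Z}_t)\|_{\boldsymbol{\Sigma}_{t-1}}$; choosing the failure probability $\asymp 1/(nT)$ and using the $O(\sqrt{n})$ per-round ceiling makes the off-event ``overflow'' contribution lower order.

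The main obstacle is controlling $\sum_{t} w_t$ with $w_t := \|\mathbf{x}_t(\mathbf{Z}_t)\|_{\boldsymbol{\Sigma}_{t-1}}$, because the standard elliptical-potential lemma requires the evaluated feature to equal the rank-one update feature, whereas here we evaluate $\mathbf{x}_t$ but update with the full $\mathbf{X}_t$. I would bridge the two through the information gain $g_t := \log\det\!\big(\mathbf{I}_n + \mathbf{X}_t \boldsymbol{\Sigma}_{t-1}\mathbf{X}_t^\top/\sigma^2\big)$, which by the Weinstein--Aronszajn identity is exactly the log-determinant increment, so that $\sum_t g_t = \log\!\big(\det\boldsymbol{\Sigma}_T^{-1}/\det\boldsymbol{\Sigma}_0^{-1}\big) \le D\log\!\big(1 + Tc_2^2/(\lambda\sigma^2)\big) = O(D\log T)$ deterministically (using $\|\mathbf{X}_t\|_2\le c_2$). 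Writing $M_t = \mathbf{X}_t\boldsymbol{\Sigma}_{t-1}\mathbf{X}_t^\top$, I have $w_t^2 = \mathbf{1}^\top M_t \mathbf{1} \le n\,\lambda_{\max}(M_t)$, while concavity of $\log(1+\cdot)$ on the bounded range $[0,c_2^2/(\lambda\sigma^2)]$ yields $g_t \ge C\,\lambda_{\max}(M_t)$ for a constant $C=C(c_2,\lambda,\sigma)$ independent of $n$. Combining gives $w_t^2 \le (n/C)\,g_t$, hence $\sum_t w_t^2 = O(nD\log T)$, and Cauchy--Schwarz (with Jensen across the posterior) produces $\mathbb{E}\sum_t w_t = O\!\big(\sqrt{nDT\log T}\big)$. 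This is precisely the step where the node-level (rather than aggregate) posterior update matters and where the factor $n$ enters through the $\mathbf{1}^\top M_t \mathbf{1}$ contraction.

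Finally I would assemble the pieces: the dominant term is $\sum_t 2\beta_t w_t \le 2\big(\max_t \beta_t\big)\,\mathbb{E}\sum_t w_t = O\!\big(\sqrt{D\log(nT)}\big)\cdot O\!\big(\sqrt{nDT\log T}\big) = O\!\big(D\sqrt{nT}\,\log(nT)\big)$, with the overflow and noise-cancellation terms verified to be lower order, yielding the claimed bound. I expect the genuinely nonstandard part to be the inequality $w_t^2 \le (n/C)\,g_t$ relating the evaluated aggregate width to the full-design information gain; the posterior-sampling cancellation, the $\chi^2_D$ confidence radius, and the log-determinant telescoping are comparatively routine adaptations of the linear Thompson-sampling analysis.
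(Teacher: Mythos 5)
Your proposal is correct in substance and shares the paper's high-level architecture---both arguments ultimately rest on the Russo--Van Roy posterior-sampling framework---but you execute the two technical pillars quite differently. The paper never runs the regret decomposition explicitly: it constructs a frequentist UCB algorithm (networkUCL, Algorithm 2 in the appendix), proves its regret bound by adapting the self-normalized martingale of \citet{imp_lin_algs} to designs with $n$ rows per round (the appendix's Lemma 1) together with an adapted elliptical-potential lemma (Lemma 2), and then invokes Proposition 3 of \citet{russo2014learning} as a black box to transfer that bound to Thompson sampling. You instead run the decomposition by hand, with a purely Bayesian confidence radius (the $\chi^2_D$ concentration of the Gaussian posterior, uniform over feature directions via Cauchy--Schwarz), and you replace the elliptical-potential lemma with the information-gain bridge $w_t^2 = \mathbf{1}^\top M_t \mathbf{1} \le n\,\lambda_{\max}(M_t) \le (n/C)\,g_t$ followed by log-determinant telescoping. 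That bridge is the genuinely nonstandard piece, and it is sound: $\lambda_{\max}(M_t) \le c_2^2/\lambda$ because $\boldsymbol{\Sigma}_{t-1} \preceq \boldsymbol{\Sigma}_0$, so the concavity constant $C$ is dimension-free, and the telescoped sum is $O(D\log T)$ since the operator-norm bound on $\mathbf{X}_t$ keeps each per-round eigenvalue bounded. This makes the origin of the $\sqrt{n}$ factor completely explicit, and it avoids the step in the paper's appendix Theorem where the per-round regret is truncated via $\min(\cdot,1)$---a step that, as written, presupposes per-round regret at most $2$, whereas with $n$ nodes it can be of order $n$; your route needs no truncation because $\sum_t w_t^2 = O(nD\log T)$ is bounded directly.

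One divergence you should flag as a caveat: your $\chi^2_D$ step requires the posterior to be exactly Gaussian, i.e., genuinely Gaussian noise, whereas Assumption 4 of the theorem only imposes $1$-sub-Gaussian noise. The paper's self-normalized martingale confidence sets are valid for any sub-Gaussian noise, which is precisely why it routes through the frequentist construction before applying the Russo--Van Roy reduction. To prove the theorem exactly as stated you would swap your $\chi^2$ radius for such a frequentist ellipsoid (coverage then holds under Assumption 4, and the rest of your argument---the decomposition, the bridge, the assembly---goes through unchanged). Both proofs share the residual subtlety that the posterior-matching identity presumes the algorithm samples from the true posterior, which Algorithm 1's conjugate Gaussian updates only do under Gaussian noise, so this is a refinement rather than a defect unique to your approach.
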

We prove Theorem \ref{main_thm} by adapting the results on frequentist regret of \citet{imp_lin_algs} to our setting and then applying Proposition $5$ of \citet{russo2014learning}. Our assumptions on $\boldsymbol{\theta}$ and $\mbf{H}$ match those of \citet{russo2014learning}. Full proofs are provided in Section 2 of the Supplement. 

We show that our approach is near-optimal by deriving a lower bound on regret.

\begin{theorem}
    Under the same assumptions as Theorem \ref{main_thm}, for any policy $\pi$, there exists a $\theta \in \Theta$  such that $R_T(\mathcal{A}, \theta)\geq \Omega\left(D\sqrt{nT}\right).$
\end{theorem}

The proof adapts arguments from Section 24.1 of \cite{Lattimore_Szepesvári_2020}. We see that the upper bound provided in Theorem \ref{main_thm} matches this lower bound up to a logarithmic factor, thus proving that our algorithm is near-optimal.

Interestingly, our regret bounds resemble those for standard linear bandits but include a factor of $n$. The intuition for this difference lies in the feedback structure: for a network of size $n$, by time $T$ we have accumulated information for $nT$ nodes while the standard linear bandit algorithms will have only received $T$ observations. However, the increased complexity of the action space partially offsets the increase in information at each round. The balance of these opposing forces mirrors the exploration-exploitation tradeoff in standard linear bandits, resulting in our bounds.

As discussed in Section \ref{subsec:reward_funcs}, a potential pitfall is that $D$ may scale with $n$. For example, if each node has a unique direct effect, then the dimension of $\boldsymbol{\theta}$ scales at least linearly in $n$.  We focus our theoretical analysis on the case of a fixed $D$ and discuss the impact of $T$ and $n$ on regret as this allows a clear comparison to other algorithms in the bandit literature. We provide results from a variety of choices of the reward function through simulation studies in the following section. 

\paragraph{Comparison to previous approaches}

\begin{itemize}
    \item \textbf{Classic MAB Approach.} Standard MAB algorithms fail in this context for two reasons. First, they would attempt to learn each of the $(L+1)^n$ arms independently, which quickly becomes infeasible (e.g. TS has a regret bound of $O\left(\sqrt{2^nT\log2^n}  \right)$ \citep{near_opt_ts}). Second, they can only be applied to a static network, as the node reward functions are fixed over time.
    \item \textbf{Linear Bandits.} Collapsing Equation \eqref{linear_form} by summing over the rows results in a standard linear bandit formulation. However, this turns $n$ data points into $1$, losing a significant amount of information about $\boldsymbol{\theta}$. By learning more from the same data, our algorithm performs strictly better. 
    \item \textbf{Network Bandits.} Existing approaches for bandits under network interference assume only NIA. The algorithm in \cite{agarwal2024} attempts to estimate the Fourier coefficients of the reward function for each node. The length of the resulting parameter vector is exponential in the size of the network, severely limiting scaling ability. The inability of existing algorithms to scale is a primary motivation for our work.    
\end{itemize}

\section{SIMULATIONS}
We validate the flexibility and scalability of our method through simulation studies. We structure our experiments around three reward function parameterizations: a  shared parameter model, a block-structured model, and a more complex latent features model. 

\subsection{Linear Spillovers}\label{sec:lin_spill}
We begin with a shared-parameter model. Each node has reward function 
\[r_i(\mbf{Z}_t;\mbf{A}_t) = \mu\cdot Z_{t,i} + \sum_{k=1}^{d_i} \gamma_{k} \cdot \mbf{1}_{\{d^1_{t,i} = k\}} + \epsilon_{t,i}.\]

We set $\boldsymbol{\mu}_0=0$,$\Sigma_0=\mbf{I}$, and use budget $B=\frac{n}{5}$ (i.e. $\|\mbf{Z}_t\|_1 \leq B$). For each iteration, we draw $\mu \sim \text{N}(1,0.2)$ and $\gamma_k\sim \text{N}(k, 0.5)$. At each time period, we simulate $\mbf{A}_t$ from a stochastic block model (SBM) of sizes $n \in \{100, 500, 1000\}$ with $K = \frac{n}{10}$ groups having uniform group membership probabilities, the probability $p_{\text{within}} = 0.25$ of sharing a link with nodes in the same group and probability $p_{\text{between}} = \frac{1}{n}$ of a link between nodes in different groups. Errors are distributed $\epsilon_{i,t}\sim \text{N}(0,1)$.

For each $n$, we run $25$ iterations with $T=100$. In Figure \ref{fig:linear_cumulative} we evaluate our algorithm using the mean regret of the iterations and include $95\%$ confidence bands. 

\begin{figure}[htbp]
    \centering
    \includegraphics[width=0.95\linewidth]{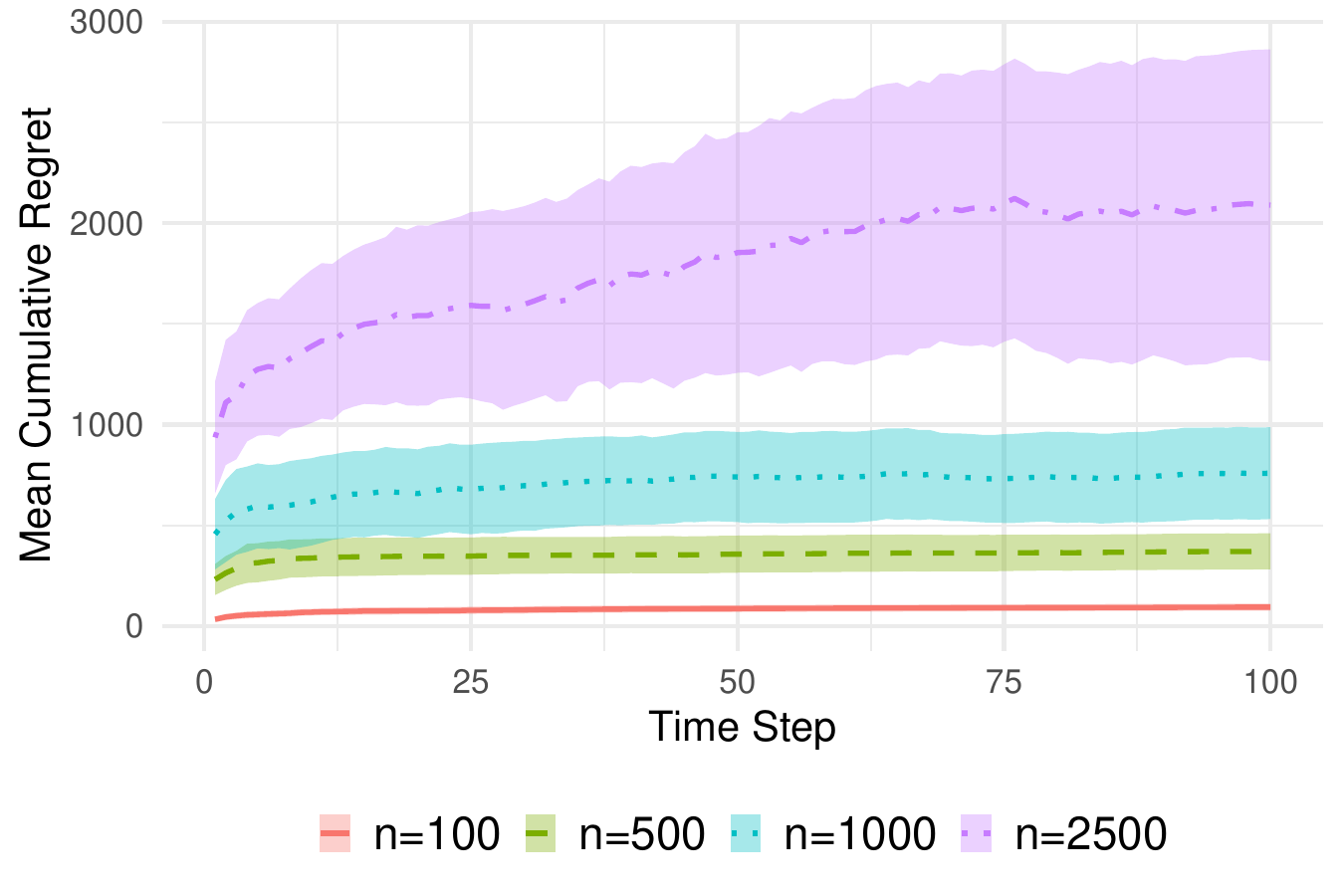}
    \caption{Cumulative regret plot under the shared parameters model.}
    \label{fig:linear_cumulative}
\end{figure}

\subsection{Grouped Effects}\label{grouped_effects_sim}
We now consider the grouped model of Section \ref{grouped_params}. For each group, the reward function follows Equation~\eqref{sania_reward_fixed}, with $\mu_g$ representing the direct effect for group $g$ and $\gamma_g$ representing the vector of potential spillover effects for units in group $g$:
\[r_i(\mbf{Z_t};\mbf{A_t},G_{t,i}=g) = Z_{t,i}\cdot\mu_g + \sum_{k=1}^{d_i} \gamma_{g,k} \cdot \mbf{1}_{\{d^1_{t,i} = k\}}+ \epsilon_{t,i}.\]

We use the same data generating process for $\mbf{A_t}$ as the previous section. The parameter groupings at each time $t$ are defined by the $K$ groups of the SBM which we for now assume the agent observes. 

For each iteration, we draw parameters $\mu_g \sim \text{N}(1, 0.2)$, $\gamma_{g,k} \sim \text{N}(k,1)$. The priors, budget, and number of iterations are the same as the previous section. We plot the cumulative regret for multiple network sizes in Figure \ref{fig:grouped_cumulative}.

\begin{figure}[htbp]
    \centering
    \includegraphics[width=0.95\linewidth]{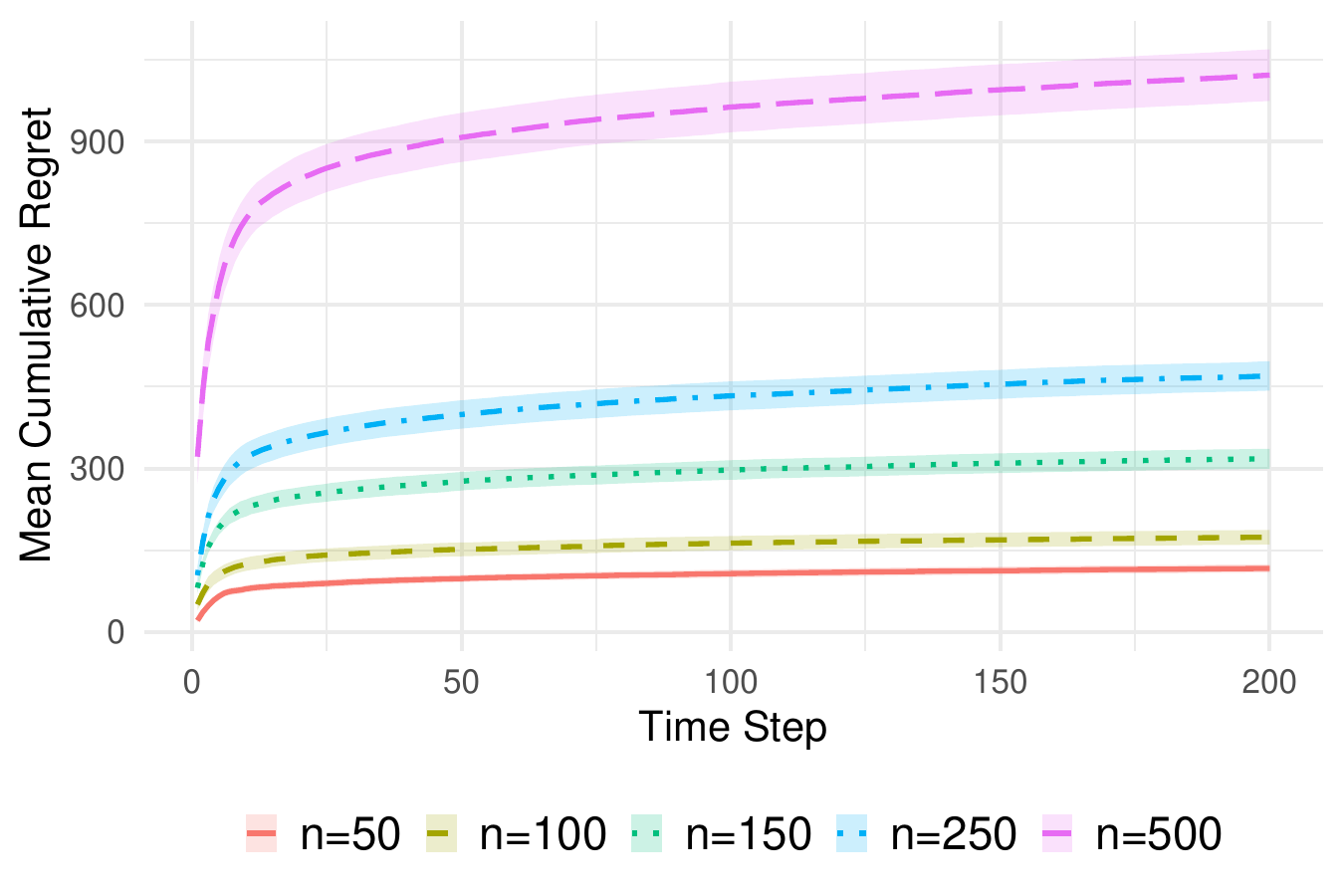}
    \caption{Cumulative regret for a variety of network sizes under grouped SANIA reward functions.}
    \label{fig:grouped_cumulative}
\end{figure}

\subsubsection{Estimating Group Labels}
We now consider the setting where the agent does not observe the group labels of all nodes. We assume that the agent observes the true label of one ``anchor'' node per group and must estimate all other group labels. 

To do so, the agent aggregates all previously observed networks into a cumulative adjacency matrix $\mbf{A}^{(t)}_{\text{cumulative}} = \sum_{i=1}^t \mbf{A}_i$. An SBM with a Poisson likelihood is then fit to this matrix to cluster the nodes into $K$ groups, producing an estimated group label for each node. The anchor nodes are key to avoid label switching, as the labels themselves are arbitrary once the groups have been defined. 

While our regret bounds assume that the reward functions are correctly specified (and thus in this case node labels are observed), we show in Figure \ref{fig:sbm_est_groups} that our algorithm still performs well when some labels are latent and must be estimated. 

\begin{figure}[htbp]
    \centering
    \includegraphics[width=0.95\linewidth]{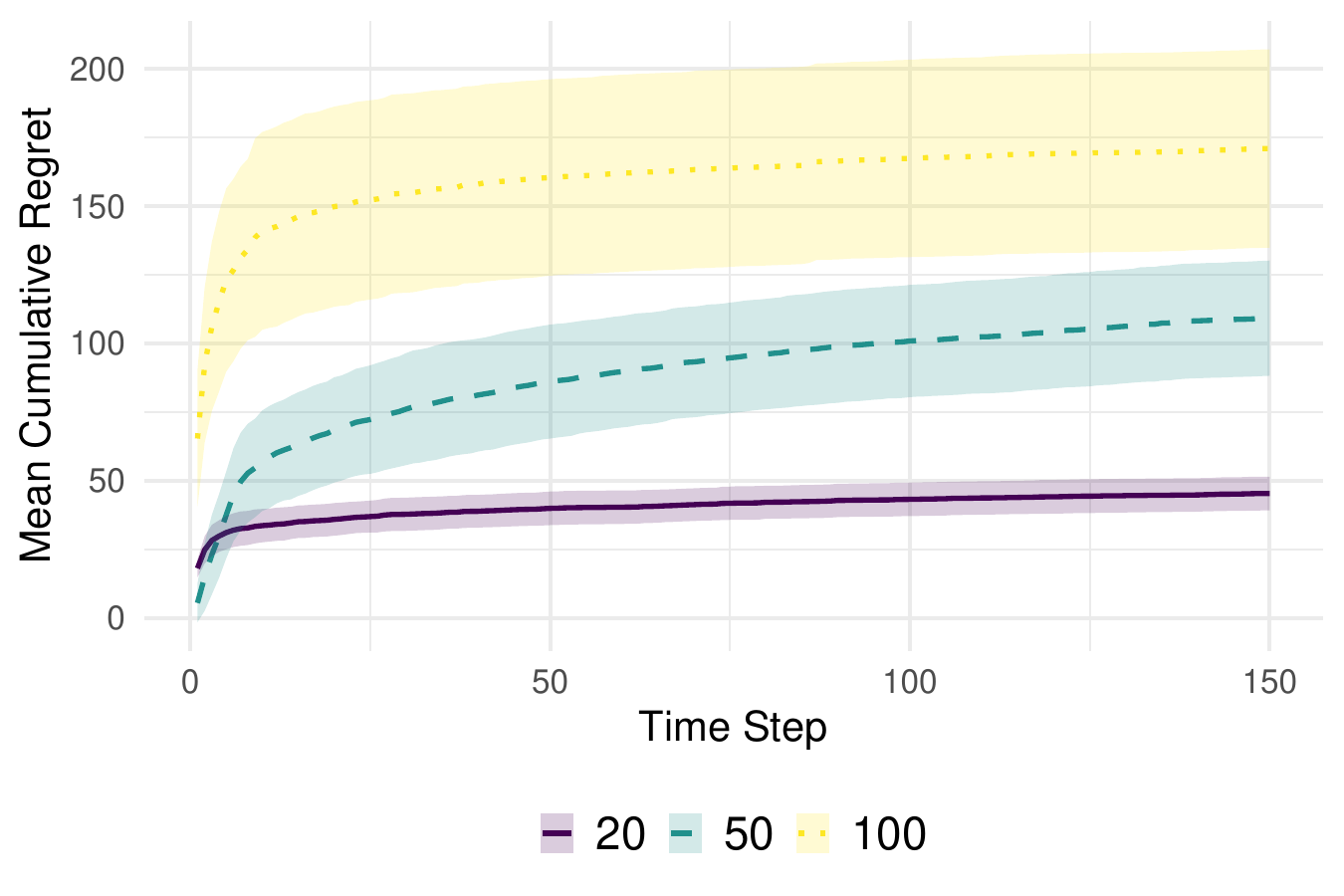}
    \caption{Cumulative regret under grouped SANIA reward function where group labels are estimated using one anchor node per group.}
    \label{fig:sbm_est_groups}
\end{figure}

\subsection{Heterogeneous Effects with Latent Features}

We now consider a model where treatment effects are heterogeneous and depend on latent node characteristics.  We use the latent feature parameterization from Section \ref{subsec:reward_funcs}, where the reward for node $i$ with latent features $\boldsymbol{\zeta}_i \in \mathbb{R}^2$ is given by:
$$r_{i}(\mbf{Z}_t;\mbf{A}_t,\boldsymbol{\zeta}_{i}) = Z_{t,i}\cdot \boldsymbol{\zeta}_{i}^\top\boldsymbol{\mu} + \sum_{k=1}^{d_{\text{max}}} (\boldsymbol{\zeta}_{i}^\top \boldsymbol{\alpha}_k) \cdot \mbf{1}_{\{d^1_{t,i} = k\}} + \epsilon_{t,i}.$$

The data-generating process uses the same underlying features to drive both network structure and rewards:
\begin{enumerate}
    \item \textbf{Latent Feature Generation:} For each iteration, we generate latent features for $n\in \{100, 500, 1000\}$ nodes by drawing $\boldsymbol{\zeta}_i \sim \mathcal{N}(\mathbf{0}, \mathbf{I}_2)$ for $i=1,\dots,n$. These features are fixed for all $t$.
    \item \textbf{Parameter and Reward Generation:} The true parameters are drawn once per run as $\boldsymbol{\mu} \sim \mathcal{N}(\mathbf{0}, \mathbf{I}_2)$ and $\boldsymbol{\alpha}_k \sim \mathcal{N}(\mathbf{0}, \mathbf{I}_2)$ for $k=1, \dots, d_{\text{max}}$. The noise is standard normal, $\epsilon_{t,i} \sim \mathcal{N}(0,1)$.
    \item \textbf{Network Generation:} At each time step $t$, the network $\mbf{A}_t$ is generated from a latent space model. The probability of an edge between nodes $i$ and $j$ depends on the Euclidean distance between their features: $P([\mbf{A}_t]_{ij} = 1) = \text{logit}^{-1}(\alpha - \|\boldsymbol{\zeta}_i - \boldsymbol{\zeta}_j\|_2)$. This process creates networks with community structures where nearby nodes in the latent space are more likely to be connected.
    
\end{enumerate}

For this experiment, we assume the latent features $\boldsymbol{\zeta}_i$ are observable to the agent at decision time. The agent's task is to learn the unknown parameter vectors $\boldsymbol{\mu}$ and $\{\boldsymbol{\alpha}_k\}$. We run the simulation for $T=500$ steps with a budget to treat $20\%$ of nodes. For each $n$, we set $\alpha$ so that the expected degree of each node is approximately $5$. We show the cumulative regret for multiple network sizes in Figure \ref{fig:latent_cumulative}.

\begin{figure}[htbp]
    \centering
    \includegraphics[width=0.95\linewidth]{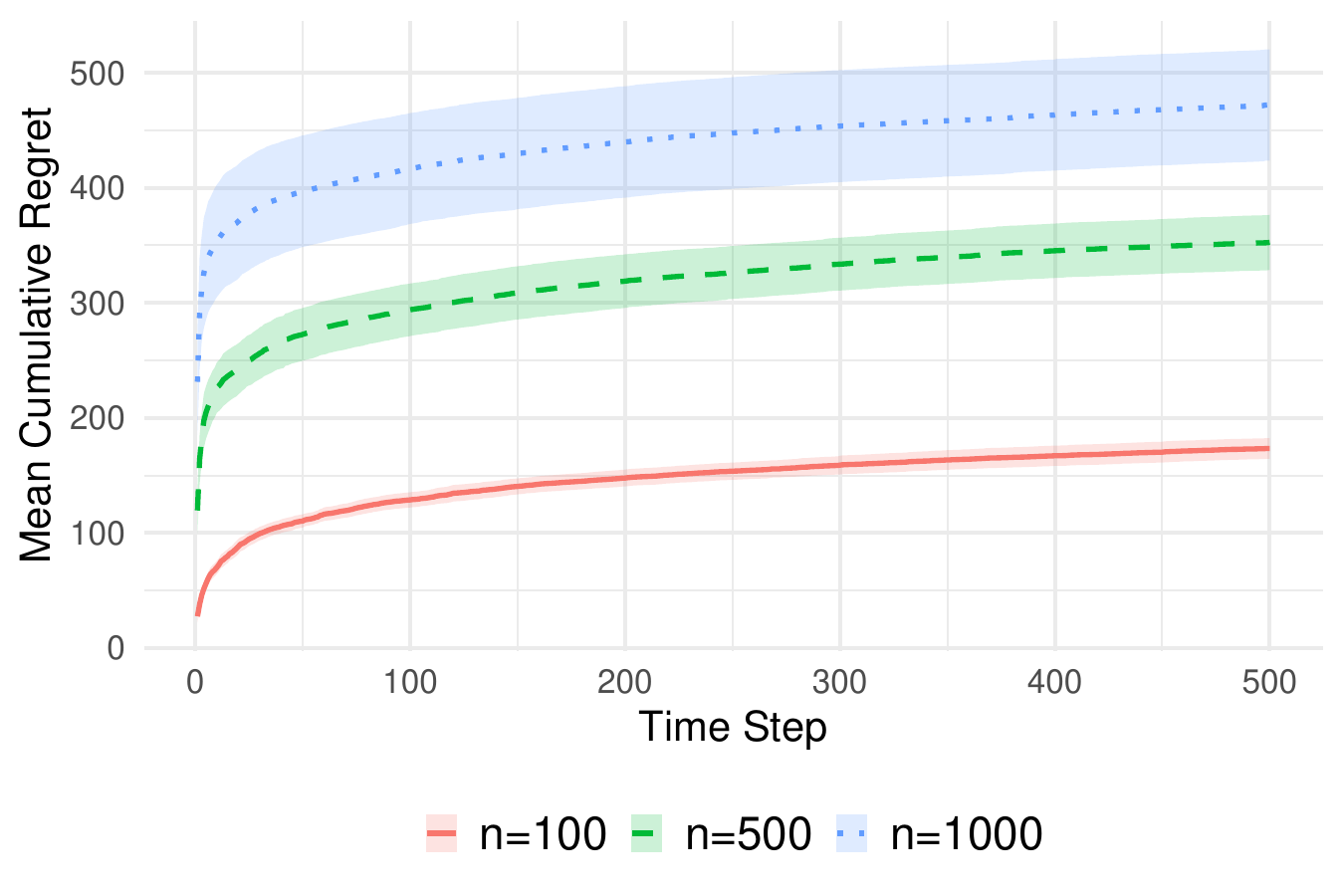}
    \caption{Cumulative regret plots for latent feature reward model.}
    \label{fig:latent_cumulative}
\end{figure}

\subsection{Comparisons to Existing Methods}
To compare our method to \citet{agarwal2024} we postulate a most favorable data generating process (DGP) to their approach: the reward functions follow Equation \ref{sania_reward_fixed} and remain fixed over time, and the network is small ($n=8$) and static. We use the same parameter generation scheme as the grouped effects simulations. 

\begin{figure}[h]
    \centering
    \includegraphics[width=0.95\linewidth]{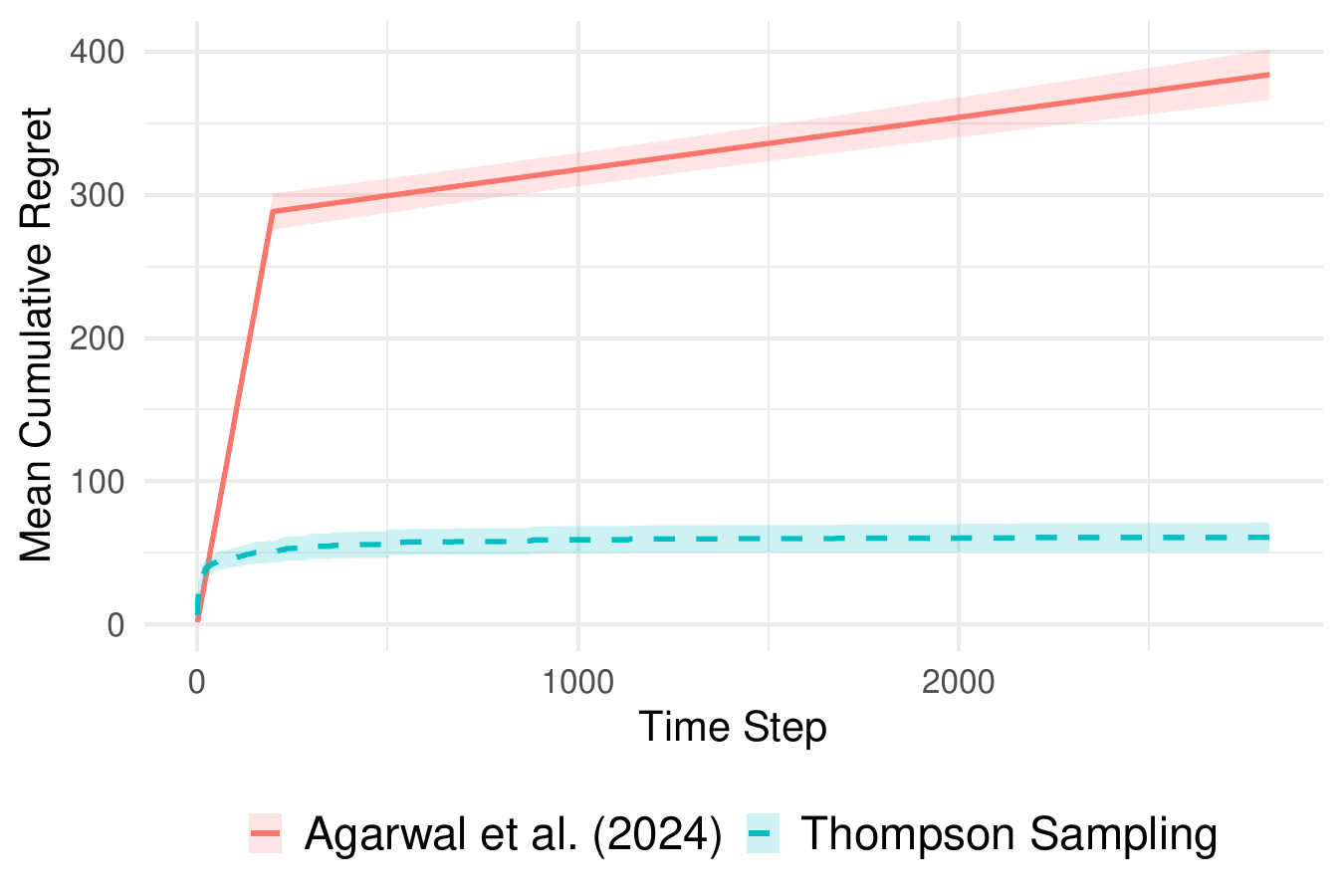}
    \caption{Comparison to \cite{agarwal2024} on a static network of size $n=8$.}
    \label{fig:agarwal_compare}
\end{figure}

In Figure \ref{fig:agarwal_compare}, we see that our method significantly outperforms the more general \citet{agarwal2024}. We also compare our methods under misspecification of the SANIA assumptions, specifically by omitting Assumption $2$ from the DGP . In Figure \ref{fig:agarwal_misspec}, we see that our method continues to outperform, despite misspecification of the SANIA assumptions.

\begin{figure}[h]
    \centering
    \includegraphics[width=0.95\linewidth]{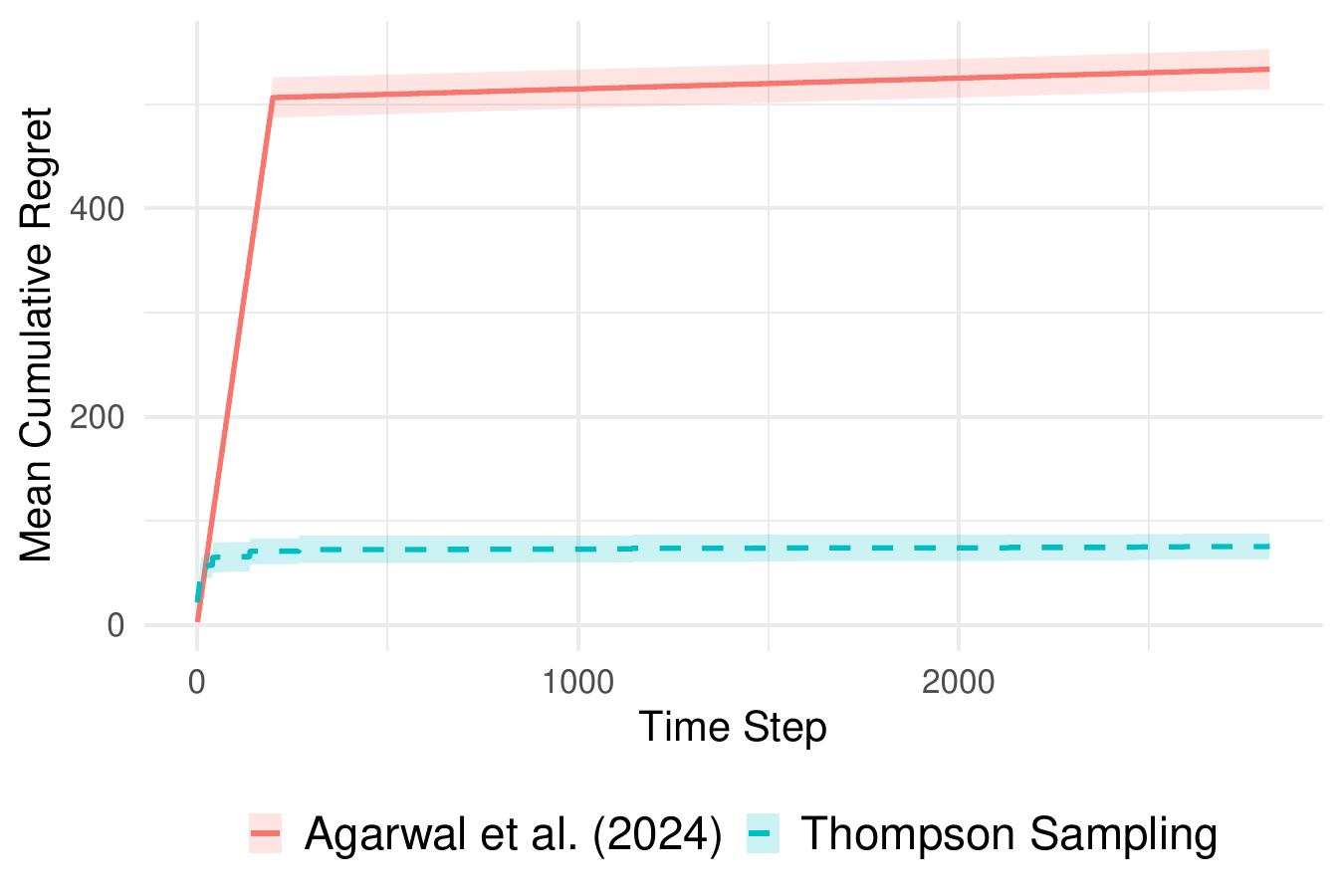}
    \caption{Comparison to \cite{agarwal2024} under misspecification of the additivity assumption. Our method remains robust and continues to outperform.}
    \label{fig:agarwal_misspec}
\end{figure}

\subsection{SANIA Misspecification}
We investigate the sensitivity of our algorithm to violations of the SANIA assumptions. The following reward function violates Assumption~\ref{ass3} as the contribution of a treated neighbor depends on that neighbors' own degree:
\[r_i = \mu_i Z_i + \sum_{j \in \mathcal{N_i}}A_{ij}Z_j\deg(j) + \epsilon_i.\]
We generate rewards from this model and fix an Erd\"os-Renyi graph with $n=8$ and $p=0.3$. We use the reward model from Section~\ref{sec:lin_spill} which assumes symmetric spillovers and is thus misspecified.

In Figure~\ref{fig:sym_miss_comp} we compare directly to both \citet{agarwal2024} and a standard MAB, as these more general algorithms do not suffer from misspecification. We fix a network over time as required by \citet{agarwal2024}. Despite being misspecified, our algorithm significantly outperforms both alternatives. This is because the SANIA-based parameterization, while unable to capture the degree-dependent spillover effects exactly, learns an effective approximation with far fewer parameters. In contrast, \citet{agarwal2024} must estimate an exponentially larger parameter vector, and the standard MAB must explore $2^n$ arms independently, resulting in substantially higher variance that dominates any bias advantage from correct specification.

\begin{figure}[htpb]
    \centering
    \includegraphics[width=0.95\linewidth]{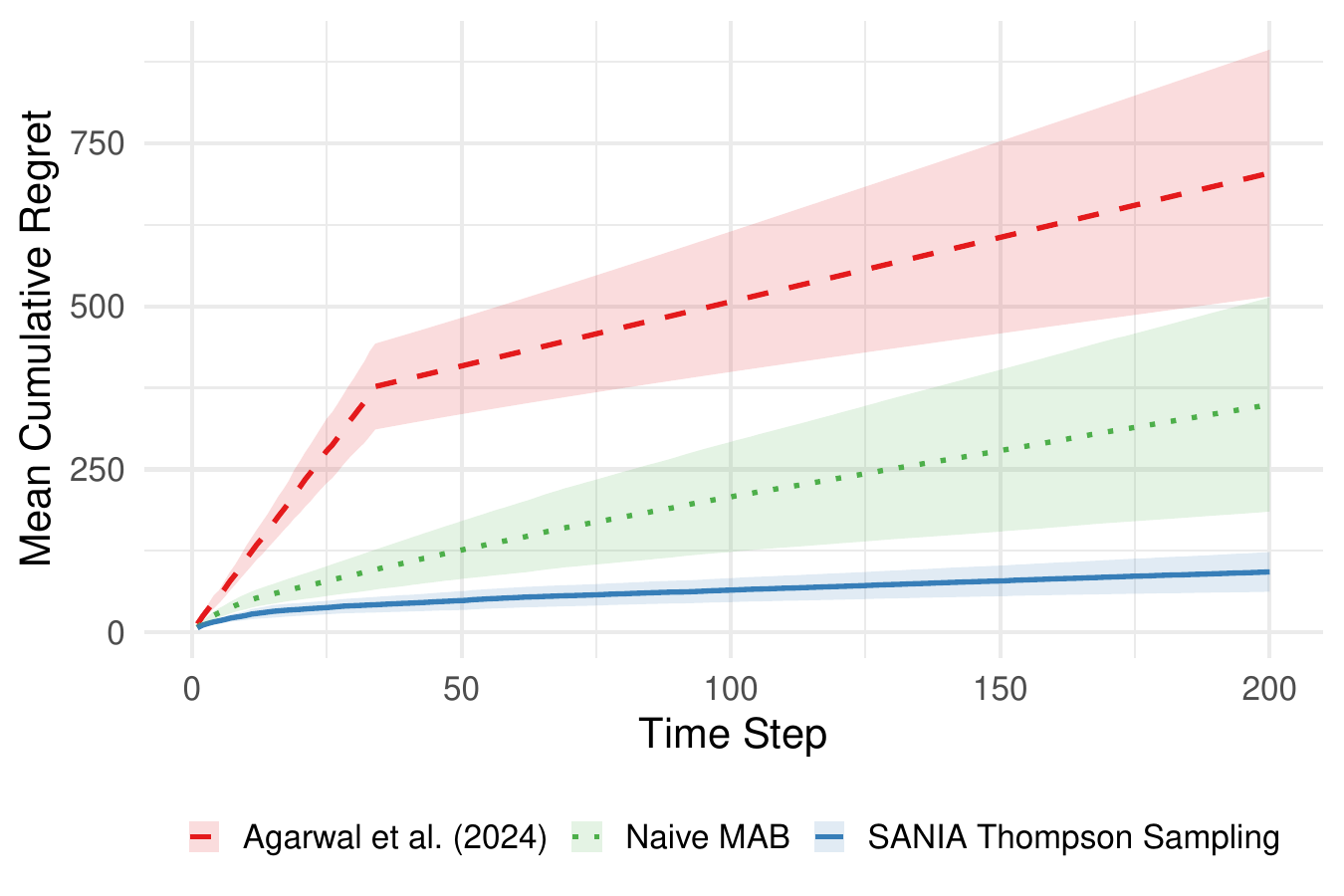}
    \caption{Comparison of Algorithm~\ref{alg:TS-Interference} to \citet{agarwal2024} under symmetry misspecification. }
    \label{fig:sym_miss_comp}
\end{figure}

\subsection{Network Misspecification}
Our algorithm and analysis assumes that the agent observes $\mbf{A}_t$ at each time period. However, issues with data collection can cause existing edges to be unobserved or nonexistent edges falsely set to $1$. We test our algorithms sensitivity to misspecified adjacency matrices by rerunning the experiment from Section~\ref{sec:lin_spill} while randomly flipping a portion $\epsilon\in  \{0, 0.05, 0.1, 0.2 \}$ of the edges in $\mbf{A}_t$. In Figure~\ref{fig:noisy_net} we see that regret scales well even with $20\%$ of the edges flipped.

\begin{figure}[h]
    \centering
    \includegraphics[width=0.95\linewidth]{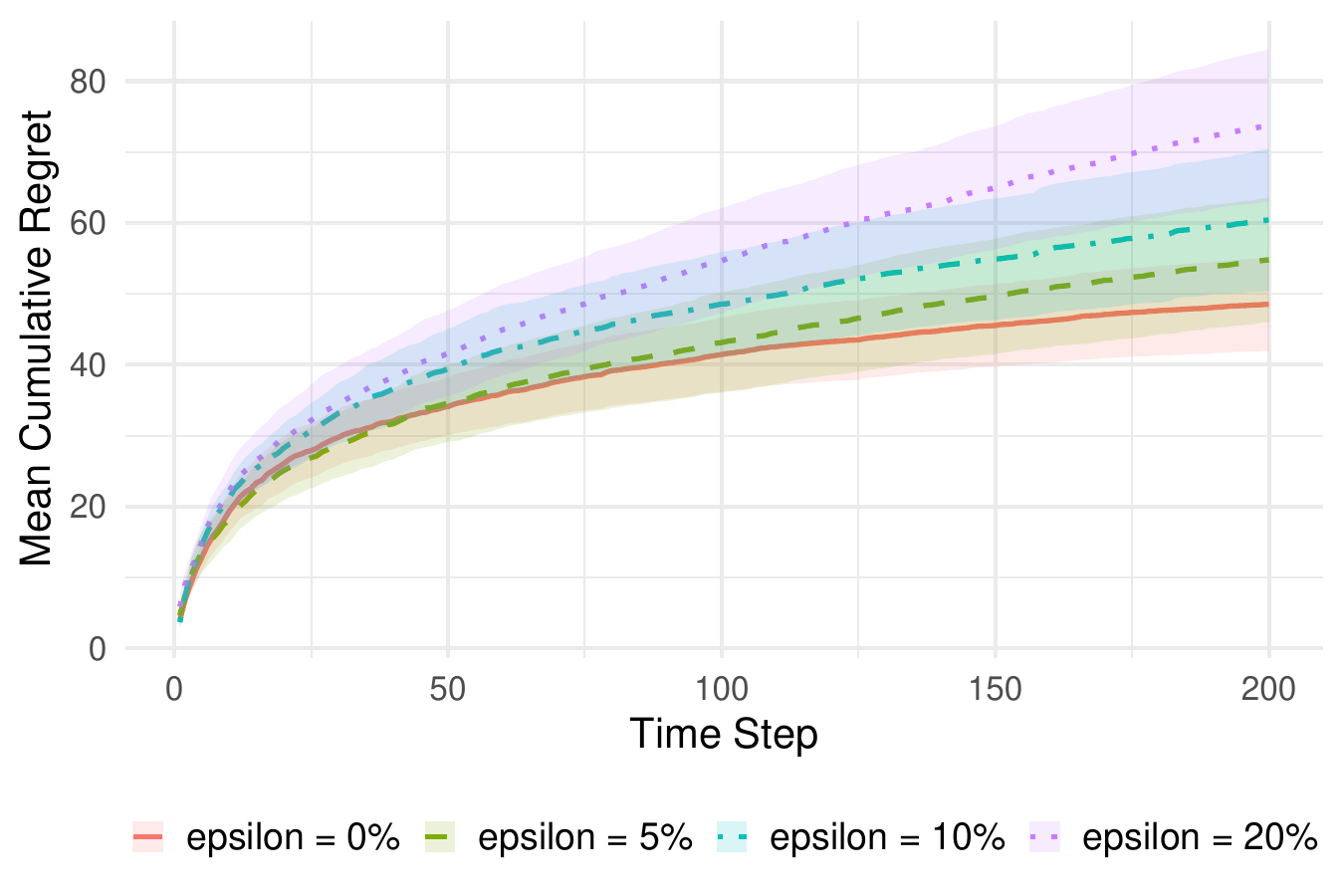}
    \caption{Cumulative regret under the linear spillover model for $n=8$ under various noise levels that result in misspecifed $\mbf{A}_t$.}
    \label{fig:noisy_net}
\end{figure}

\section{CONCLUSION} 
We have introduced a scalable Thompson sampling algorithm for regret minimization under network interference. We show that the SANIA assumptions can be leveraged for scalable policy maximization, bridging a gap between the causal inference and bandits literature. We prove Bayesian regret bounds and provide strong evidence of performance and scalability through simulation experiments. Our work suggests future research into linear optimization algorithms specific to network interference. Extensions of our method to partially observed networks could greatly impact their practicality. Empirical validations beyond simulation experiments can test the impact on real-world outcomes. 

\section*{Acknowledgments}
We gratefully acknowledge funding from the National Science foundation under grants DMS 2346292, DMS 2230074, DMS CAREER 2046880, and the National Institute of Health under grant R01CA280970.

\newpage

\bibliography{papers}

@misc{agarwal2024,
      title={Multi-Armed Bandits with Network Interference}, 
      author={Abhineet Agarwal and Anish Agarwal and Lorenzo Masoero and Justin Whitehouse},
      year={2024},
      eprint={2405.18621},
      archivePrefix={arXiv},
      primaryClass={cs.LG},
      url={https://arxiv.org/abs/2405.18621}, 
}

@InProceedings{toulis13,
  title = 	 {Estimation of Causal Peer Influence Effects},
  author = 	 {Toulis, Panos and Kao, Edward},
  booktitle = 	 {Proceedings of the 30th International Conference on Machine Learning},
  pages = 	 {1489--1497},
  year = 	 {2013},
  editor = 	 {Dasgupta, Sanjoy and McAllester, David},
  volume = 	 {28},
  series = 	 {Proceedings of Machine Learning Research},
  address = 	 {Atlanta, Georgia, USA},
  month = 	 {17--19 Jun},
  publisher =    {PMLR},
  pdf = 	 {http://proceedings.mlr.press/v28/toulis13.pdf}
}

@article{russo2014learning,
  author = {Russo, Daniel and Van Roy, Benjamin},
  title = {Learning to Optimize via Posterior Sampling},
  journal = {Mathematics of Operations Research},
  volume = {39},
  number = {4},
  pages = {1221--1243},
  year = {2014},
  publisher = {INFORMS},
  doi = {10.1287/moor.2014.0656}
}

@inproceedings{imp_lin_algs,
 author = {Abbasi-{Y}adkori, Yasin and P\'{a}l, D\'{a}vid and Szepesv\'{a}ri, Csaba},
 booktitle = {Advances in Neural Information Processing Systems},
 editor = {J. Shawe-Taylor and R. Zemel and P. Bartlett and F. Pereira and K.Q. Weinberger},
 pages = {},
 publisher = {Curran Associates, Inc.},
 title = {Improved Algorithms for Linear Stochastic Bandits},
 url = {https://proceedings.neurips.cc/paper_files/paper/2011/file/e1d5be1c7f2f456670de3d53c7b54f4a-Paper.pdf},
 volume = {24},
 year = {2011}
}

@misc{sussman2017,
      title={Elements of estimation theory for causal effects in the presence of network interference}, 
      author={Daniel L. Sussman and Edoardo M. Airoldi},
      year={2017},
      eprint={1702.03578},
      archivePrefix={arXiv},
      primaryClass={stat.ME},
      url={https://arxiv.org/abs/1702.03578},
}

@misc{gurobi,
  author = {{Gurobi Optimization, LLC}},
  title = {{Gurobi Optimizer Reference Manual}},
  year = 2024,
  url = "https://www.gurobi.com"
}

@article{murphy_odtr,
    author = {Murphy, S. A.},
    title = {Optimal Dynamic Treatment Regimes},
    journal = {Journal of the Royal Statistical Society Series B: Statistical Methodology},
    volume = {65},
    number = {2},
    pages = {331-355},
    year = {2003},
    month = {04},
    abstract = {A dynamic treatment regime is a list of decision rules, one per time interval, for how the level of treatment will be tailored through time to an individual’s changing status. The goal of this paper is to use experimental or observational data to estimate decision regimes that result in a maximal mean response. To explicate our objective and to state the assumptions, we use the potential outcomes model. The method proposed makes smooth parametric assumptions only on quantities that are directly relevant to the goal of estimating the optimal rules. We illustrate the methodology proposed via a small simulation.},
    issn = {1369-7412},
    doi = {10.1111/1467-9868.00389},
    url = {https://doi.org/10.1111/1467-9868.00389},
    eprint = {https://academic.oup.com/jrsssb/article-pdf/65/2/331/49684411/jrsssb\_65\_2\_331.pdf},
}

@article{ROTHSCHILD,
title = {A two-armed bandit theory of market pricing},
journal = {Journal of Economic Theory},
volume = {9},
number = {2},
pages = {185-202},
year = {1974},
issn = {0022-0531},
doi = {https://doi.org/10.1016/0022-0531(74)90066-0},
url = {https://www.sciencedirect.com/science/article/pii/0022053174900660},
author = {Michael Rothschild}
}

@article{wager_xu,
author = {Wager, Stefan and Xu, Kuang},
title = {Experimenting in Equilibrium},
journal = {Management Science},
volume = {67},
number = {11},
pages = {6694-6715},
year = {2021},
doi = {10.1287/mnsc.2020.3844},

URL = { 
    
        https://doi.org/10.1287/mnsc.2020.3844
    
    

},
eprint = { 
    
        https://doi.org/10.1287/mnsc.2020.3844
    
    

}
}

@article{EcklesKarrerUgander,
url = {https://doi.org/10.1515/jci-2015-0021},
title = {Design and Analysis of Experiments in Networks: Reducing Bias from Interference},
title = {},
author = {Dean Eckles and Brian Karrer and Johan Ugander},
pages = {20150021},
volume = {5},
number = {1},
journal = {Journal of Causal Inference},
doi = {doi:10.1515/jci-2015-0021},
year = {2017},
lastchecked = {2025-04-22}
}

@misc{jamshidi2025,
      title={Graph-Dependent Regret Bounds in Multi-Armed Bandits with Interference}, 
      author={Fateme Jamshidi and Mohammad Shahverdikondori and Negar Kiyavash},
      year={2025},
      eprint={2503.07555},
      archivePrefix={arXiv},
      primaryClass={cs.LG},
      url={https://arxiv.org/abs/2503.07555}, 
}

@article{near_opt_ts,
author = {Agrawal, Shipra and Goyal, Navin},
title = {Near-Optimal Regret Bounds for Thompson Sampling},
year = {2017},
issue_date = {October 2017},
publisher = {Association for Computing Machinery},
address = {New York, NY, USA},
volume = {64},
number = {5},
issn = {0004-5411},
url = {https://doi.org/10.1145/3088510},
doi = {10.1145/3088510},
abstract = {Thompson Sampling (TS) is one of the oldest heuristics for multiarmed bandit problems. It is a randomized algorithm based on Bayesian ideas and has recently generated significant interest after several studies demonstrated that it has favorable empirical performance compared to the state-of-the-art methods. In this article, a novel and almost tight martingale-based regret analysis for Thompson Sampling is presented. Our technique simultaneously yields both problem-dependent and problem-independent bounds: (1) the first near-optimal problem-independent bound of O(√ NT ln T) on the expected regret and (2) the optimal problem-dependent bound of (1 + ϵ)Σi ln T / d(μi,μ1) + O(N/ϵ2) on the expected regret (this bound was first proven by Kaufmann et al. (2012b)).Our technique is conceptually simple and easily extends to distributions other than the Beta distribution used in the original TS algorithm. For the version of TS that uses Gaussian priors, we prove a problem-independent bound of O(√ NT ln N) on the expected regret and show the optimality of this bound by providing a matching lower bound. This is the first lower bound on the performance of a natural version of Thompson Sampling that is away from the general lower bound of Ω (√ NT) for the multiarmed bandit problem.},
journal = {J. ACM},
month = sep,
articleno = {30},
numpages = {24},
keywords = {Multi-armed bandits}
}

@article{manski_ident,
 ISSN = {00346527, 1467937X},
 URL = {http://www.jstor.org/stable/2298123},
 abstract = {This paper examines the reflection problem that arises when a researcher observing the distribution of behaviour in a population tries to infer whether the average behaviour in some group influences the behaviour of the individuals that comprise the group. It is found that inference is not possible unless the researcher has prior information specifying the composition of reference groups. If this information is available, the prospects for inference depend critically on the population relationship between the variables defining reference groups and those directly affecting outcomes. Inference is difficult to impossible if these variables are functionally dependent or are statistically independent. The prospects are better if the variables defining reference groups and those directly affecting outcomes are moderately related in the population.},
 author = {Charles F. Manski},
 journal = {The Review of Economic Studies},
 number = {3},
 pages = {531--542},
 publisher = {[Oxford University Press, Review of Economic Studies, Ltd.]},
 title = {Identification of Endogenous Social Effects: The Reflection Problem},
 urldate = {2025-04-23},
 volume = {60},
 year = {1993}
}

@article{hudgens_holloran,
 ISSN = {01621459},
 URL = {http://www.jstor.org/stable/27640105},
 abstract = {A fundamental assumption usually made in causal inference is that of no interference between individuals (or units); that is, the potential outcomes of one individual are assumed to be unaffected by the treatment assignment of other individuals. However, in many settings, this assumption obviously does not hold. For example, in the dependent happenings of infectious diseases, whether one person becomes infected depends on who else in the population is vaccinated. In this article, we consider a population of groups of individuals where interference is possible between individuals within the same group. We propose estimands for direct, indirect, total, and overall causal effects of treatment strategies in this setting. Relations among the estimands are established; for example, the total causal effect is shown to equal the sum of direct and indirect causal effects. Using an experimental design with a two-stage randomization procedure (first at the group level, then at the individual level within groups), unbiased estimators of the proposed estimands are presented. Variances of the estimators are also developed. The methodology is illustrated in two different settings where interference is likely: assessing causal effects of housing vouchers and of vaccines.},
 author = {Michael G. Hudgens and M. Elizabeth Halloran},
 journal = {Journal of the American Statistical Association},
 number = {482},
 pages = {832--842},
 publisher = {[American Statistical Association, Taylor & Francis, Ltd.]},
 title = {Toward Causal Inference with Interference},
 urldate = {2025-04-23},
 volume = {103},
 year = {2008}
}

@article{on_causal_interference,
author = {Eric J Tchetgen Tchetgen and Tyler J VanderWeele},
title ={On causal inference in the presence of interference},

journal = {Statistical Methods in Medical Research},
volume = {21},
number = {1},
pages = {55-75},
year = {2012},
doi = {10.1177/0962280210386779},
    note ={PMID: 21068053},

URL = { 
    
        https://doi.org/10.1177/0962280210386779
    
    

},
eprint = { 
    
        https://doi.org/10.1177/0962280210386779
    
    

}

}

@article{aronow_samii,
 ISSN = {19326157},
 URL = {http://www.jstor.org/stable/26362172},
 author = {Peter M. Aronow and Cyrus Samii},
 journal = {The Annals of Applied Statistics},
 number = {4},
 pages = {1912--1947},
 publisher = {Institute of Mathematical Statistics},
 title = {ESTIMATING AVERAGE CAUSAL EFFECTS UNDER GENERAL INTERFERENCE, WITH APPLICATION TO A SOCIAL NETWORK EXPERIMENT},
 urldate = {2025-04-23},
 volume = {11},
 year = {2017}
}

@article{annurev_survey,
   author = "Bramoullé, Yann and Djebbari, Habiba and Fortin, Bernard",
   title = "Peer Effects in Networks: A Survey", 
   journal= "Annual Review of Economics",
   year = "2020",
   volume = "12",
   number = "Volume 12, 2020",
   pages = "603-629",
   doi = "https://doi.org/10.1146/annurev-economics-020320-033926",
   url = "https://www.annualreviews.org/content/journals/10.1146/annurev-economics-020320-033926",
   publisher = "Annual Reviews",
   issn = "1941-1391",
   type = "Journal Article",
   keywords = "identification",
   keywords = "causal effects",
   keywords = "social networks",
   keywords = "JEL C90",
   keywords = "JEL C31",
   keywords = "measurement errors",
   keywords = "randomization",
   keywords = "JEL C21",
   keywords = "peer effects"
  }

@misc{munro2025treatmenteffectsmarketequilibrium,
      title={Treatment Effects in Market Equilibrium}, 
      author={Evan Munro and Xu Kuang and Stefan Wager},
      year={2025},
      eprint={2109.11647},
      archivePrefix={arXiv},
      primaryClass={econ.EM},
      url={https://arxiv.org/abs/2109.11647}, 
}

@article{viviano,
    author = {Viviano, Davide},
    title = {Policy Targeting under Network Interference},
    journal = {The Review of Economic Studies},
    volume = {92},
    number = {2},
    pages = {1257-1292},
    year = {2024},
    month = {04},
    doi = {10.1093/restud/rdae041},
    url = {https://doi.org/10.1093/restud/rdae041},
    eprint = {https://academic.oup.com/restud/article-pdf/92/2/1257/57799234/rdae041.pdf},
}

@InProceedings{chen13a,
  title = 	 {Combinatorial Multi-Armed Bandit: General Framework and Applications},
  author = 	 {Chen, Wei and Wang, Yajun and Yuan, Yang},
  booktitle = 	 {Proceedings of the 30th International Conference on Machine Learning},
  pages = 	 {151--159},
  year = 	 {2013},
  editor = 	 {Dasgupta, Sanjoy and McAllester, David},
  volume = 	 {28},
  number =       {1},
  series = 	 {Proceedings of Machine Learning Research},
  address = 	 {Atlanta, Georgia, USA},
  month = 	 {17--19 Jun},
  publisher =    {PMLR},
  pdf = 	 {http://proceedings.mlr.press/v28/chen13a.pdf},
  url = 	 {https://proceedings.mlr.press/v28/chen13a.html}
}

@article{Hoff,
author = {Peter D Hoff and Adrian E Raftery and Mark S Handcock and},
title = {Latent Space Approaches to Social Network Analysis},
journal = {Journal of the American Statistical Association},
volume = {97},
number = {460},
pages = {1090--1098},
year = {2002},
publisher = {ASA Website},
doi = {10.1198/016214502388618906},


URL = { 
    
        https://doi.org/10.1198/016214502388618906
    
    

},
eprint = { 
    
        https://doi.org/10.1198/016214502388618906
    
    

}

}

@article{spillover_epid,
    author = {Benjamin-Chung, Jade and Arnold, Benjamin F and Berger, David and Luby, Stephen P and Miguel, Edward and Colford Jr, John M and Hubbard, Alan E},
    title = {Spillover effects in epidemiology: parameters, study designs and methodological considerations},
    journal = {International Journal of Epidemiology},
    volume = {47},
    number = {1},
    pages = {332-347},
    year = {2017},
    month = {11},
    issn = {0300-5771},
    doi = {10.1093/ije/dyx201},
    url = {https://doi.org/10.1093/ije/dyx201},
    eprint = {https://academic.oup.com/ije/article-pdf/47/1/332/25088004/dyx201.pdf},
}

@article{Ogburn02012024,
author = {Elizabeth L. Ogburn and Oleg Sofrygin and Iván Díaz and Mark J. van der Laan and},
title = {Causal Inference for Social Network Data},
journal = {Journal of the American Statistical Association},
volume = {119},
number = {545},
pages = {597--611},
year = {2024},
publisher = {ASA Website},
doi = {10.1080/01621459.2022.2131557},


URL = { 
    
        https://doi.org/10.1080/01621459.2022.2131557
    
    

},
eprint = { 
    
        https://doi.org/10.1080/01621459.2022.2131557
    
    

}

}

@article{laber_opt_treat,
    author = {Laber, Eric B. and Meyer, Nick J. and Reich, Brian J. and Pacifici, Krishna and Collazo, Jaime A. and Drake, John M.},
    title = {Optimal Treatment Allocations in Space and Time for On-Line Control of an Emerging Infectious Disease},
    journal = {Journal of the Royal Statistical Society Series C: Applied Statistics},
    volume = {67},
    number = {4},
    pages = {743-789},
    year = {2018},
    month = {07},
    doi = {10.1111/rssc.12266},
    url = {https://doi.org/10.1111/rssc.12266},
    eprint = {https://academic.oup.com/jrsssc/article-pdf/67/4/743/49362617/rssc12266-sup-0001-supinfo.pdf},
}

@misc{jia2024multiarmedbanditsinterference,
      title={Multi-Armed Bandits with Interference}, 
      author={Su Jia and Peter Frazier and Nathan Kallus},
      year={2024},
      eprint={2402.01845},
      archivePrefix={arXiv},
      primaryClass={cs.LG},
      url={https://arxiv.org/abs/2402.01845}, 
}

@article{riu_op_ind_treat_reg,
author = {Su, Lin and Lu, Wenbin and Song, Rui},
title = {Modelling and estimation for optimal treatment decision with interference},
journal = {Stat},
year   = {2019},
volume = {8},
number = {1},
pages = {e219},
keywords = {A-learning, interference, network, optimal treatment regimen, Q-learning},
doi = {https://doi.org/10.1002/sta4.219},
url = {https://onlinelibrary.wiley.com/doi/abs/10.1002/sta4.219},
eprint = {https://onlinelibrary.wiley.com/doi/pdf/10.1002/sta4.219},
note = {e219 sta4.219}
}

@misc{xu2024linearcontextualbanditsinterference,
      title={Linear Contextual Bandits with Interference}, 
      author={Yang Xu and Wenbin Lu and Rui Song},
      year={2024},
      eprint={2409.15682},
      archivePrefix={arXiv},
      primaryClass={cs.LG},
      url={https://arxiv.org/abs/2409.15682}, 
}

@book{gelman2013bayesian,
  added-at = {2022-03-31T02:40:53.000+0200},
  address = {Boca Raton, Florida},
  author = {Gelman, Andrew and Carlin, John B. and Stern, Hal S. and Dunson, David B. and Vehtari, Akti and Rubin, Donald B.},
  biburl = {https://www.bibsonomy.org/bibtex/237fd188593cb6f131eadacf86f7bd012/gdmcbain},
  edition = {Third},
  interhash = {d22f1da4df2934f61f1d4ef42c32e3be},
  intrahash = {37fd188593cb6f131eadacf86f7bd012},
  isbn = {9781439840955 1439840954},
  keywords = {62-07-data-analysis 62f15-bayesian-inference 62jxx-linear-inference-regression 62pxx-applications-of-statistics 65c40-numerical-analysis-or-methods-applied-to-Markov-chains 62-01-statistics-introductory-exposition},
  publisher = {CRC},
  refid = {966614951},
  series = {Chapman \& Hall/CRC Texts in Statistical Science Series},
  timestamp = {2025-01-09T06:30:42.000+0100},
  title = {Bayesian Data Analysis},
  url = {https://stat.columbia.edu/~gelman/book/},
  year = 2013
}

@misc{belloni,
      title={Neighborhood Adaptive Estimators for Causal Inference under Network Interference}, 
      author={Alexandre Belloni and Fei Fang and Alexander Volfovsky},
      year={2025},
      eprint={2212.03683},
      archivePrefix={arXiv},
      primaryClass={stat.ML},
      url={https://arxiv.org/abs/2212.03683}, 
}

@book{Lattimore_Szepesvári_2020, place={Cambridge}, title={Bandit Algorithms}, publisher={Cambridge University Press}, author={Lattimore, Tor and Szepesvári, Csaba}, year={2020}}

@article{manski2013,
 ISSN = {13684221, 1368423X},
 URL = {http://www.jstor.org/stable/23364965},
 abstract = {This paper studies identification of potential outcome distributions when treatment response may have social interactions. Defining a person's treatment response to be a function of the entire vector of treatments received by the population, I study identification when non-parametric shape restrictions and distributional assumptions are placed on response functions. An early key result is that the traditional assumption of individualistic treatment response is a polar case within the broad class of constant treatment response (CTR) assumptions, the other pole being unrestricted interactions. Important non-polar cases are interactions within reference groups and anonymous interactions. I first study identification under Assumption CTR alone. I then strengthen this Assumption to semi-monotone response. I next discuss derivation of these assumptions from models of endogenous interactions. Finally, I combine Assumption CTR with statistical independence of potential outcomes from realized effective treatments. The findings both extend and delimit the classical analysis of randomized experiments.},
 author = {Charles F. Manski},
 journal = {The Econometrics Journal},
 number = {1},
 pages = {S1--S23},
 publisher = {Wiley},
 title = {Identification of treatment response with social interactions},
 urldate = {2025-09-25},
 volume = {16},
 year = {2013}
}

@article{birds_feathers,
   author = "McPherson, Miller and Smith-Lovin, Lynn and Cook, James M",
   title = "Birds of a Feather: Homophily in Social Networks", 
   journal= "Annual Review of Sociology",
   year = "2001",
   volume = "27",
   number = "Volume 27, 2001",
   pages = "415-444",
   doi = "https://doi.org/10.1146/annurev.soc.27.1.415",
   url = "https://www.annualreviews.org/content/journals/10.1146/annurev.soc.27.1.415",
   publisher = "Annual Reviews",
   issn = "1545-2115",
   type = "Journal Article",
   keywords = "voluntary associations",
   keywords = "organizations",
   keywords = "human ecology"
   
  }

@book{Vaart_1998, place={Cambridge}, series={Cambridge Series in Statistical and Probabilistic Mathematics}, title={Asymptotic Statistics}, publisher={Cambridge University Press}, author={Vaart, A. W. van der}, year={1998}, collection={Cambridge Series in Statistical and Probabilistic Mathematics}}





\section*{Checklist}
\begin{enumerate}

  \item For all models and algorithms presented, check if you include:
  \begin{enumerate}
    \item A clear description of the mathematical setting, assumptions, algorithm, and/or model. Yes (Sections 3 and 4)
    \item An analysis of the properties and complexity (time, space, sample size) of any algorithm. Yes (Section 4)
    \item (Optional) Anonymized source code, with specification of all dependencies, including external libraries. Yes, in Supplement.
  \end{enumerate}

  \item For any theoretical claim, check if you include:
  \begin{enumerate}
    \item Statements of the full set of assumptions of all theoretical results. Yes (Section 4, and proofs in Supplement)
    \item Complete proofs of all theoretical results. Yes (Supplement)
    \item Clear explanations of any assumptions. Yes (Sections 3 and 4)     
  \end{enumerate}

  \item For all figures and tables that present empirical results, check if you include:
  \begin{enumerate}
    \item The code, data, and instructions needed to reproduce the main experimental results (either in the supplemental material or as a URL). Yes.
    \item All the training details (e.g., data splits, hyperparameters, how they were chosen). Yes.
    \item A clear definition of the specific measure or statistics and error bars (e.g., with respect to the random seed after running experiments multiple times). Yes.
    \item A description of the computing infrastructure used. (e.g., type of GPUs, internal cluster, or cloud provider). Yes (Supplement)
  \end{enumerate}

  \item If you are using existing assets (e.g., code, data, models) or curating/releasing new assets, check if you include:
  \begin{enumerate}
    \item Citations of the creator If your work uses existing assets. Yes.
    \item The license information of the assets, if applicable. Yes.
    \item New assets either in the supplemental material or as a URL, if applicable. Yes.
    \item Information about consent from data providers/curators. Not Applicable.
    \item Discussion of sensible content if applicable, e.g., personally identifiable information or offensive content. Not Applicable.
  \end{enumerate}

  \item If you used crowdsourcing or conducted research with human subjects, check if you include:
  \begin{enumerate}
    \item The full text of instructions given to participants and screenshots. Not Applicable.
    \item Descriptions of potential participant risks, with links to Institutional Review Board (IRB) approvals if applicable. Not Applicable.
    \item The estimated hourly wage paid to participants and the total amount spent on participant compensation. Not Applicable. 
  \end{enumerate}

\end{enumerate}

\clearpage
\appendix
\thispagestyle{empty}

\onecolumn
\aistatstitle{Supplementary Materials}

\section{Linearity Under NIA}\label{nia_linear}
The neighborhood interference assumption (NIA) states that a node's outcome depends only on its own treatment status $Z_{t,i}$ and the treatment status of its neighbors $Z_{\mathcal{N}_{t,i}}$. As shown in Section 4.1.1 of \cite{sussman2017}, this directly implies that the reward functions can be written as 
\[r_i(\mbf{Z}_t;\mbf{A}_t) = \alpha_i + \beta_i Z_{t,i} + \Gamma_i(\mbf{Z}_{\mathcal{N}_{t,i}}) + Z_{t,i}\Delta_i(\mbf{Z}_{\mathcal{N}_{t,i}})\]
where 
\begin{enumerate}
    \item $\alpha_i$ is the baseline effect when neither the unit or any of its neighbors is treated.
    \item $\beta_i$ is the direct treatment effect.
    \item $\Gamma_i$ is the interference effect determined by the treatment status of the neighboring units of $i$.
    \item $\Delta_i$ is the interaction effect between unit $i$'s treatment status and the treatment of its neighbors.
\end{enumerate}

Given the treatment vector $\mbf{Z}_t$ and adjacency matrix $\mbf{A}_t$, the $ith$ row of the design matrix $H_i(\mbf{Z_t};\mbf{A}_t)$ is simply a list of indicators that select the appropriate parameters from $\boldsymbol{\theta}$ corresponding to the linear formulation of $r_i(\mbf{Z}_t;\mbf{A}_t)$. Thus, the design matrix $\mbf{H}$ can be constructed such that $\mbf{r}_t(\mbf{Z}_t;\mbf{A}_t) = \mbf{H}(\mbf{Z}_t;\mbf{A}_t)\boldsymbol{\theta} + \boldsymbol{\epsilon_t}$.

\section{Missing Proofs}\label{proofs}

\subsection{Proof of Theorem 1}
We prove Theorem 1 by first proving frequentist regret bounds for a linUCB style algorithm. 

\subsubsection{NetworkUCL Algorithm}
We adapt the linUCB algorithm from \citet{imp_lin_algs} to our setting. We prove novel regret bounds by an argument similar to \citet{imp_lin_algs}, adapting where necessary. We call this UCL-style algorithm networkUCL. It works by constructing confidence sets $C_t$ for $\boldsymbol{\theta}$ using data observed up to time $t$. It then maximizes the rewards jointly over the decision set $\{0,1\}^n$ and the confidence set. We provide basic pseudocode in Algorithm \ref{alg2}.

\begin{algorithm}[h]              
\caption{networkUCL algorithm.}  
\begin{algorithmic}[1]            

\For{ $t=1$ to $T$}
\State $(\mbf{Z}_t, \tilde{\boldsymbol{\theta}}) = \text{argmax}_{\mbf{Z} \in \{0,1\}^n,\ \boldsymbol{\theta}\in C_{t-1}}\mbf{H}(\mbf{Z}_t;\mbf{A}_t)\boldsymbol{\theta} $
\State Play $\mbf{Z}_t$, observe $\mbf{r}_t$
\State Update $C_t$
\EndFor

\end{algorithmic}\label{alg2}
\end{algorithm}

\subsubsection{NetworkUCL Regret Bound}
 We first require a bound on $\|\mathbf{H_t^\top\boldsymbol{\epsilon}\|_{\overline{V}_t^{-1}}}$, where $\ol{\mbf{V}}_t = \lambda I + \sum_{s=1}^t \mathbf{H}_s^\top \mathbf{H}_s$, and $\boldsymbol{\epsilon}$ is a vector of independent sub-gaussian random variables.
We follow a similar approach to Lemma 8, Lemma 9, and Theorem 1 of \citet{imp_lin_algs}. However, these proofs work with the martingale 
\[M_t^\lambda = \exp \left(\sum_{s=1}^t \left[ \frac{\epsilon_s\langle \lambda, H_s\rangle }{R} - \frac{1}{2}\langle \lambda, H_s \rangle^2  \right]  \right).\]  Our setting requires working with a  different martingale:
\[M_t^\lambda = \exp \left(\sum_{s=1}^t \left[ \sum_{i=1}^n\frac{\epsilon_{s,i}\langle \lambda, [H_s]_i\rangle }{R} - \frac{1}{2}\langle \lambda, [H_s]_i \rangle^2  \right]  \right).\] 

Throughout, we take $\{\mathcal{F}_t\}_{t=0}^\infty$ to be a filtration of $\sigma$-algebras such that $\mbf{H}_t$ is $\mathcal{F}_{t-1}$ measurable and $\epsilon_t$ is $\mathcal{F}_t$ measurable. We begin by adapting Lemma 8 of \citet{imp_lin_algs} to our model.

\begin{lemma}
Let $\lambda \in \mathbb{R}^d$ be arbitrary and for any $t \geq 0$ consider 
\[M_t^\lambda = \exp \left(\sum_{s=1}^t \left[ \sum_{i=1}^n\frac{\epsilon_{s,i}\langle \lambda, [X_s]_i\rangle }{R} - \frac{1}{2}\langle \lambda, [X_s]_i \rangle^2  \right]  \right).\] Let $\tau$ be a stopping time with respect to the filtration $\{\mathcal{F}\}_{t=0}^\infty$. Then under Assumption 4, $M_\tau^\lambda$ is almost surely well-defined and $\mathbb{E}[M_\tau^\lambda]\leq 1$.
\end{lemma}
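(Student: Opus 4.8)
The plan is to follow the template of Lemma 8 of \citet{imp_lin_algs}, isolating the single place where the multi-node structure enters. First I would factor $M_t^\lambda$ into per-step increments: writing
\[
D_s^\lambda = \exp\!\left( \sum_{i=1}^n \left[ \frac{\eta_{s,i}\langle \lambda, [X_s]_i\rangle}{R} - \frac{1}{2}\langle \lambda, [X_s]_i\rangle^2 \right] \right),
\]
we have $M_t^\lambda = \prod_{s=1}^t D_s^\lambda$ and $M_0^\lambda = 1$, so it suffices to show that $(M_t^\lambda)_{t\geq 0}$ is a supermartingale with respect to $\{\mathcal{F}_t\}$, i.e. that $\mathbb{E}[D_s^\lambda \mid \mathcal{F}_{s-1}] \leq 1$, and then to push this from a fixed time up to the stopping time $\tau$.

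The crucial step, and the only genuinely new ingredient relative to \citet{imp_lin_algs}, is the conditional expectation of the increment. Since each $[X_s]_i$ is $\mathcal{F}_{s-1}$-measurable (the design depends only on the network and the treatment chosen before the noise is revealed) and, by Assumption \ref{ass4}, the noise terms $\eta_{s,i}$ are independent across $i$, the conditional expectation factorizes over nodes:
\[
\mathbb{E}[D_s^\lambda \mid \mathcal{F}_{s-1}] = \prod_{i=1}^n \exp\!\left(-\tfrac{1}{2}\langle \lambda, [X_s]_i\rangle^2\right)\, \mathbb{E}\!\left[\exp\!\left(\tfrac{\eta_{s,i}\langle \lambda, [X_s]_i\rangle}{R}\right) \Bigm| \mathcal{F}_{s-1}\right].
\]
Applying the $R$-sub-Gaussian moment generating function bound to each factor, with $\gamma = \langle \lambda, [X_s]_i\rangle / R$, gives $\mathbb{E}[\exp(\gamma \eta_{s,i}) \mid \mathcal{F}_{s-1}] \leq \exp(\tfrac{1}{2}\langle\lambda,[X_s]_i\rangle^2)$, which exactly cancels the deterministic term, so each factor is at most $1$ and hence $\mathbb{E}[D_s^\lambda \mid \mathcal{F}_{s-1}]\leq 1$. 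This establishes the supermartingale property and, in particular, $\mathbb{E}[M_t^\lambda]\leq 1$ for every fixed $t$.

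Finally I would extend this to the stopping time $\tau$. Because $M_t^\lambda \geq 0$, the supermartingale convergence theorem guarantees that $M_\infty^\lambda = \lim_{t\to\infty} M_t^\lambda$ exists almost surely, which defines $M_\tau^\lambda$ unambiguously even on $\{\tau = \infty\}$ and disposes of the ``almost surely well-defined'' claim. Considering the stopped process $M_{\min(\tau,t)}^\lambda$, itself a non-negative supermartingale with expectation at most $1$, and letting $t\to\infty$, Fatou's lemma yields
\[
\mathbb{E}[M_\tau^\lambda] = \mathbb{E}\!\left[\liminf_{t\to\infty} M_{\min(\tau,t)}^\lambda\right] \leq \liminf_{t\to\infty} \mathbb{E}[M_{\min(\tau,t)}^\lambda] \leq 1.
\]

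I expect the main obstacle to be conceptual rather than computational: justifying the factorization of $\mathbb{E}[D_s^\lambda \mid \mathcal{F}_{s-1}]$, which rests on the joint use of the $\mathcal{F}_{s-1}$-measurability of the design (so the inner products act as constants under the conditional expectation) and the cross-node independence of Assumption \ref{ass4}. This is precisely the point at which the $n$-node feedback structure is exploited, and once it is in place every downstream step is a verbatim adaptation of the scalar argument, with the single sub-Gaussian bound replaced by its $n$-fold product.
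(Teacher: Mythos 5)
Your proof is correct and follows essentially the same route as the paper's: factor $M_t^\lambda$ into per-round increments, use the cross-node independence of Assumption \ref{ass4} together with the $R$-sub-Gaussian moment bound to show each increment has conditional expectation at most $1$, and conclude the supermartingale property. The only cosmetic difference is that you spell out the stopping-time extension (supermartingale convergence plus Fatou's lemma), whereas the paper simply defers that step to the proof of Lemma 8 of \citet{imp_lin_algs} --- the argument you give is exactly the one being cited.
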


\begin{proof}
To prove that $M_t^\lambda$ is a supermartingale, we must show that $\mathbb{E}[M_t^\lambda | \mathcal{F}_{t-1}]\leq M_{t-1}^\lambda$. We denote the overall contribution at time $t$ as $D_t^\lambda$ and the contribution of the $i$th row of $\mathbf{H}_t$ as $D_{t,i}^\lambda$:
    Let 
    \begin{align*}
    D_t^\lambda &= \exp \left(  \sum_{i=1}^n\frac{\epsilon_{t,i}\langle \lambda, [X_t]_i\rangle }{R} - \frac{1}{2}\langle \lambda, [X_t]_i \rangle^2  \right) \\
    D_{t,i}^\lambda &= \exp\left(\frac{\epsilon_{t,i}\langle \lambda, [X_t]_i\rangle }{R} - \frac{1}{2}\langle \lambda, [X_t]_i \rangle^2\right)
    \end{align*}
    Notice that $D_t^\lambda = D_{t,1}^\lambda \cdot D_{t,2}^\lambda\cdot\ldots\cdot D_{t,n}^\lambda. $ Now, by the 1-sub Gaussianity of $\epsilon_{t,i}$, we have that $\bb{E}[D_{t,i}^\lambda | \cc{F}_{t-1}] \leq 1$. Combining this with the independence of the $\epsilon$'s, we get
    \[\mathbb{E}[D_t^\lambda|\mathcal{F}_{t-1}] = \bb{E}[D_{t,1}^\lambda |\mathcal{F}_{t-1}] \cdots \bb{E}[D_{t,i}^\lambda |\mathcal{F}_{t-1}]\leq 1.\] We thus have 
    \begin{align*}
    \bb{E}\left[M_t^\lambda |\cc{F}_{t-1}\right] &= \bb{E}\left[D_1^\lambda\cdots D_t^\lambda | \cc{F}_{t-1}\right] \\
    &= D_1^\lambda \cdots D_{t-1}^\lambda \bb{E}\left[D_t^\lambda | \cc{F}_{t-1}\right] \\
    &= M_{t-1}^\lambda \bb{E}\left[D_t^\lambda | \cc{F}_{t-1}\right] \leq 1.
    \end{align*}
   The fact that $M_\tau^\lambda$ is well-defined follows from the proof of Lemma 8 of \citet{imp_lin_algs}.
\end{proof}

With this result in hand, Lemma 9 and Theorem 1 from \citet{imp_lin_algs} follow directly. We next need to extend Lemma 11 from \citet{imp_lin_algs} to our setting. Our proof differs due to the batch structure of our updates to $\ol{V}_{t}$, however our result is effectively equivalent to theirs.

\begin{lemma}[Adapted from Lemma 11 of \cite{imp_lin_algs}]
\label{lem:adapted_lemma_11}
Assume that for all $t$, the spectral norm of the design matrix is bounded such that $\|\mbf{H}_t\|_2 \le c_2$. Under the condition that the regularization parameter $\lambda \ge c_2^2$, it holds that
\[
\sum_{t=1}^T \sum_{i=1}^n \|[\mbf{H}_t]_i\|^2_{\ol{\mbf{V}}_{t-1}^{-1}} \le 2(\log\det\ol{\mbf{V}}_T - \log\det \mbf{V})
\]
where $\ol{\mbf{V}}_t = \mbf{V} + \sum_{s=1}^t \mbf{H}_s^\top \mbf{H}_s$ and $\mbf{V} = \lambda \mbf{I}$.
\end{lemma}

\begin{proof}
The proof adapts the core argument from \cite{imp_lin_algs} to our setting, where the update term $\mbf{H}_t^\top\mbf{H}_t$ may have a rank greater than one.

First, we express the inner sum over the $n$ nodes as a matrix trace. The term $\|[\mbf{H}_t]_i\|^2_{\ol{\mbf{V}}_{t-1}^{-1}}$ is a scalar and is thus equal to its own trace.
\begin{align*}
\sum_{i=1}^n \|[\mbf{H}_t]_i\|^2_{\ol{\mbf{V}}_{t-1}^{-1}} &= \sum_{i=1}^n \operatorname{tr}\left([\mbf{H}_t]_i \ol{\mbf{V}}_{t-1}^{-1} [\mbf{H}_t]_i^\top\right) \\
&= \sum_{i=1}^n \operatorname{tr}\left([\mbf{H}_t]_i^\top[\mbf{H}_t]_i \ol{\mbf{V}}_{t-1}^{-1}\right) && \text{by the cyclic property of the trace} \\
&= \operatorname{tr}\left(\left(\sum_{i=1}^n [\mbf{H}_t]_i^\top[\mbf{H}_t]_i\right) \ol{\mbf{V}}_{t-1}^{-1}\right) && \text{by the linearity of the trace} \\
&= \operatorname{tr}\left(\mbf{H}_t^\top\mbf{H}_t \ol{\mbf{V}}_{t-1}^{-1}\right)
\end{align*}
The quantity to bound is therefore $\sum_{t=1}^T \operatorname{tr}(\mbf{H}_t^\top\mbf{H}_t \ol{\mbf{V}}_{t-1}^{-1})$.

Next, we relate this trace to the growth of the log-determinant of $\ol{\mbf{V}}_t$. The determinant update rule is
\[
\det(\ol{\mbf{V}}_t) = \det(\ol{\mbf{V}}_{t-1} + \mbf{H}_t^\top\mbf{H}_t) = \det(\ol{\mbf{V}}_{t-1}) \cdot \det(\mbf{I} + \ol{\mbf{V}}_{t-1}^{-1} \mbf{H}_t^\top\mbf{H}_t).
\]
Taking the logarithm, we have
\[
\log\det(\ol{\mbf{V}}_t) - \log\det(\ol{\mbf{V}}_{t-1}) = \log\det(\mbf{I} + \mbf{A}_t),
\]
where $\mbf{A}_t = \ol{\mbf{V}}_{t-1}^{-1} \mbf{H}_t^\top\mbf{H}_t$. Note that $\operatorname{tr}(\mbf{A}_t) = \operatorname{tr}(\ol{\mbf{V}}_{t-1}^{-1} \mbf{H}_t^\top\mbf{H}_t) = \operatorname{tr}(\mbf{H}_t^\top\mbf{H}_t \ol{\mbf{V}}_{t-1}^{-1})$.

We use the inequality $x \le 2\log(1+x)$, which holds for $x \in [0, 1]$. This can be extended to matrices: for a positive semi-definite matrix $\mbf{A}_t$ with all its eigenvalues in $[0,1]$, we have $\operatorname{tr}(\mbf{A}_t) \le 2\log\det(\mbf{I}+\mbf{A}_t)$. The condition on the eigenvalues of $\mbf{A}_t$ is met if its largest eigenvalue, $\lambda_{\max}(\mbf{A}_t)$, is at most 1. We bound this eigenvalue as follows:
\[
\lambda_{\max}(\mbf{A}_t) \le \|\mbf{A}_t\|_2 \le \|\ol{\mbf{V}}_{t-1}^{-1}\|_2 \|\mbf{H}_t^\top\mbf{H}_t\|_2 = \lambda_{\max}(\ol{\mbf{V}}_{t-1}^{-1}) \|\mbf{H}_t\|_2^2.
\]
Since $\lambda_{\max}(\ol{\mbf{V}}_{t-1}^{-1}) = 1/\lambda_{\min}(\ol{\mbf{V}}_{t-1}) \le 1/\lambda$ and $\|\mbf{H}_t\|_2 \le c_2$ by assumption, we have $\lambda_{\max}(\mbf{A}_t) \le c_2^2 / \lambda$. Our stated condition $\lambda \ge c_2^2$ ensures that $\lambda_{\max}(\mbf{A}_t) \le 1$. Note that a similar assumption is made in \cite{imp_lin_algs}.

Thus, the inequality holds, and we have
\[
\operatorname{tr}(\mbf{H}_t^\top\mbf{H}_t \ol{\mbf{V}}_{t-1}^{-1}) \le 2 \log\det(\mbf{I} + \mbf{A}_t) = 2\left(\log\det(\ol{\mbf{V}}_t) - \log\det(\ol{\mbf{V}}_{t-1})\right).
\]
Summing this inequality from $t=1$ to $T$ yields a telescoping series:
\begin{align*}
\sum_{t=1}^T \operatorname{tr}(\mbf{H}_t^\top\mbf{H}_t \ol{\mbf{V}}_{t-1}^{-1}) &\le 2 \sum_{t=1}^T \left(\log\det(\ol{\mbf{V}}_t) - \log\det(\ol{\mbf{V}}_{t-1})\right) \\
&= 2\left(\log\det(\ol{\mbf{V}}_T) - \log\det(\mbf{V})\right).
\end{align*}
This completes the proof.
\end{proof}

We now prove an upper bound on frequentist regret for our UCL algorithm.
\setcounter{theorem}{2}
\begin{theorem}
    Under the conditions of Lemma 1 and Lemma 2 above as well as Theorem 2 from \cite{imp_lin_algs}, then with probability $1-\delta$ the networkUCL algorithm satisfies
    \[R_t\leq \sqrt{nTD\log (\lambda + nTL/D)}(\lambda^{1/2}S + R\sqrt{2\log(1/\delta) + D\log (1 + nTL/(\lambda D)} \]
\end{theorem}
\begin{proof}
    First, define $r_{t,i}$ as the regret accumulated by node $i$ at time $t$. The following inequality comes directly from the proof of Theorem 3 from \cite{imp_lin_algs}. Following their notation, denote $[H_t]_i^*$ as the optimal feature vector for node $i$ at time $t$. Additionally, $\theta^*$ denotes the true parameter vector, $\tilde{\theta}_t$ the optimistic choice, and $\hat{\theta}_t$ the OLS estimate.  
    \begin{align}
        r_{t,i} &= \langle [H_t]^*_i, \theta^*\rangle - \langle[H_t]_i, \theta^*\rangle\notag \\
        &\leq \langle [H_t]_i, \tilde{\theta}_t\rangle - \langle\theta^*, [H_t]_i\rangle\notag \\
        &= \langle[H_t]_i, \tilde{\theta}_t - \theta^*\rangle \notag\\
        &= \langle [H_t]_i, \hat{\theta}_{t-1} - \theta^*\rangle + \langle \tilde{\theta}_t - \hat{\theta}_{t-1}, [H_t]_i\rangle \notag\\
        &\leq \|\hat{\theta}_{t-1} - \theta^*\|_{\ol{V}_{t-1}} \|[H_t]_i\|_{\ol{V}_{t-1}^{-1}} + \|\tilde{\theta}_t - \hat{\theta}\|_{\ol{V}_{t-1}} \|[H_t]_i\|_{\ol{V}_{t-1}^{-1}}\notag \\
        &\leq 2 \sqrt{\beta_{t-1}(\delta)} \|[H_t]_i\|_{\ol{V}_{t-1}^{-1}} 
    \end{align}

    Using our previous lemma, we can now bound total cumulative regret. A simple extension of Lemma 10 from \cite{imp_lin_algs} gives us that $\log\det (\ol{V}_T) \leq D\log(\lambda + nTL/D)$, and Theorem 2 directly gives us that $\beta_{t}(\delta) \leq R\sqrt{D\log((1+nTL/\lambda)/\delta)} + \lambda^{1/2}S.$
    \begin{align*}
        R_t &= \sum_{t=1}^T \sum_{i=1}^n r_{t,i} = \sum_{t=1}^T \mbf{r}_t^\top\boldsymbol{1}_n \\
        &\leq \sum_{t=1} \sqrt{\|\mbf{r}_t\|_2^2 \|\boldsymbol{1}_n\|_2^2} = \sqrt{n} \sum_{t=1}^T \|\mbf{r}_t\|_2 \\
        &\leq \sqrt{nT\sum_{t=1}^T \|\mbf{r_t}\|_2^2} \\
        &= \sqrt{nT\sum_{t=1}^T\sum_{i=1}^n r_{t,i}^2}\\
        &\leq \sqrt{4nT\beta_{t-1}(\delta)\sum_{t=1}^T\sum_{i=1}^n \|[H_t]_i\|_{\ol{V}_{t-1}^{-1}}} && \text{by (1)} \\
        &\leq \sqrt{8nT\beta_{t-1}(\delta)\log\det (\ol{V}_T)} && \text{by Lemma 2} \\
        &\leq \sqrt{8nT\beta_{t-1}(\delta)D\log(\lambda + nTL/D)}\\
        &\leq \sqrt{nTD\log (\lambda + nTL/D)}(\lambda^{1/2}S + R\sqrt{2\log(1/\delta) + D\log (1 + nTL/(\lambda D)}
    \end{align*}
\end{proof}

\subsection{Proof of Theorem 1}

With Theorem 3, our result now follows directly from \cite{russo2014learning}.

\begin{proof}
  Given our frequentist regret bound, Theorem $1$ follows directly from the proof of Proposition $3$ in \cite{russo2014learning}.

  Specifically, Proposition $1$ of \cite{russo2014learning} (combined with the regret decomposition of the Bayesian regret of a UCB algorithm) states that the Bayesian regret of Thompson sampling is less than or equal to the Bayesian regret of a UCB-style algorithm. Because our frequentist regret bounds are for a UBC-style algorithm, we can directly use this result to bound the Bayesian regret of Thompson sampling by integrating our frequentist bounds over the distribution of the unknown parameter vector. Such an integral results in bounds of the same asymptotic order, thus giving us the bound in Theorem 1. 
\end{proof}

\subsection{Proof of Theorem 2}

\begin{proof}
   The proof proceeds by first constructing a specific hard problem instance and then showing any algorithm must incur a certain amount of regret on that problem.

\subsubsection*{Step 1: Hard Instance Construction}
Denote the set of possible parameter vectors as $\Theta = \{-\delta, \delta\}^D$ for some $\delta > 0$ to be chosen later. We consider a simple network of $n$ disconnected nodes. We assign to each node $j$ a fixed feature vector $w_j \in \{-1, 1\}^D$. These vectors are constructed from the rows of a $D \times D$ Hadamard matrix. The reward for a node $j$ is linear in its features: $r_{t,j} = Z_{t,j} \langle w_j, \theta \rangle + \epsilon_{t,j}$, where $\epsilon_{t,j} \sim \mathcal{N}(0,1)$. The design matrix $H(Z_t)$ is formed by stacking the row vectors $w_j$ for all treated nodes $j$ (where $Z_{t,j}=1$).

To analyze the regret, we focus on a specific binary choice for each dimension $i \in \{1, \dots, D\}$. We partition the nodes into two sets: $G_{i,A} = \{j \mid w_{ji} = +1\}$ and $G_{i,B} = \{j \mid w_{ji} = -1\}$. By the properties of Hadamard matrices, $|G_{i,A}| = |G_{i,B}| = n/2$. We define two actions:
\begin{itemize}
    \item \textbf{Action $Z_{i,A}$}: Treat all nodes in group $G_{i,A}$.
    \item \textbf{Action $Z_{i,B}$}: Treat all nodes in group $G_{i,B}$.
\end{itemize}
If the true parameter is $\theta = \delta \cdot u_i$ (where $u_i$ is the standard basis vector), the total expected reward for action $Z_{i,A}$ is $\sum_{j \in G_{i,A}} \langle w_j, \delta u_i \rangle = (n/2) \cdot \delta$. The total expected reward for $Z_{i,B}$ is $\sum_{j \in G_{i,B}} \langle w_j, \delta u_i \rangle = (n/2) \cdot (-\delta)$. The regret from making a single error by choosing the wrong action is therefore $n\delta$.

\subsubsection*{Step 2: Information-Theoretic Bound}
We use the Bretagnolle-Huber inequality to lower bound the probability of error. Let's consider distinguishing between two parameter vectors $\theta$ and $\theta'$, where $\theta$ and $\theta'$ differ only on the $i$-th component, i.e., $\theta_i' = -\theta_i$. Let $p_{\theta i}$ be the probability of the ``bad event" where an algorithm chooses the suboptimal action corresponding to dimension $i$ for at least half of the $T$ rounds. The inequality states
$$p_{\theta i} + p_{\theta' i} \geq \frac{1}{2}\exp\left(-\text{KL}(\mathbb{P_\theta} ||\mathbb{P}_{\theta'})\right).$$
The KL divergence is given by
\begin{align*}
    \text{KL}(\mathbb{P}_\theta||\mathbb{P}_{\theta'}) &= \frac{1}{2} \sum_{t=1}^T\mathbb{E}\left[\|H(Z_t)(\theta - \theta')\|_2^2 \right] = 2\delta^2 \sum_{t=1}^T \mathbb{E}\left[\| [H(Z_t)]_i \|_2^2\right]
\end{align*}
In our construction, if the algorithm chooses $Z_{i,A}$ or $Z_{i,B}$, it treats $n/2$ nodes. The $i$-th column of the design matrix, $[H(Z_t)]_i$, will have $n/2$ entries of $\pm 1$. Its squared L2-norm is $\|[H(Z_t)]_i\|_2^2 = n/2$. The KL divergence is thus bounded by
$$ \text{KL}(\mathbb{P}_\theta||\mathbb{P}_{\theta'}) \le 2\delta^2 \sum_{t=1}^T \frac{n}{2} = \delta^2 nT $$
To ensure the KL divergence is a constant (e.g., 1), we choose $\delta = \frac{1}{\sqrt{nT}}$.
Then, using the ``averaging hammer" argument \citep{Lattimore_Szepesvári_2020} over all pairs $(\theta, \theta')$ and all dimensions $i$, we can show that there exists a ``hard" parameter vector $\theta^*$ for which $\sum_{i=1}^D p_{\theta^*i} \ge \frac{CD}{2}$ for some constant $C$.

\subsubsection*{Step 3: Regret Analysis}
We now connect the probability of error to the total regret. For the constructed hard instance, the regret incurred from a single error on dimension $i$ is $n\delta$.
\begin{align*}
    R_T(\theta^*) &= \mathbb{E}\left[\sum_{t=1}^T \text{regret}_t\right] \\
    &\ge \sum_{i=1}^D \mathbb{E}\left[ \sum_{t=1}^T n\delta \cdot \boldsymbol{1}\{\text{error on component } i \text{ at time } t \}\right] \\
    &= n\delta \sum_{i=1}^D \mathbb{E}\left[\sum_{t=1}^T\boldsymbol{1}\{\text{error on } i \text{ at time } t\}  \right] \\
    &\geq n\delta \frac{T}{2} \sum_{i=1}^D \mathbb{P}\left(\sum_{t=1}^T\boldsymbol{1}\{\text{error on } i\} \geq \frac{T}{2}\right) && \text{(by Markov's Inequality)} \\
    &= \frac{n\delta T}{2} \sum_{i=1}^D p_{\theta^*i} \\
    &\geq \frac{n\delta T}{2} \left(\frac{CD}{2}\right) && \text{(from the Averaging Hammer)} \\
    &= \frac{CnDT\delta}{4} \\
    &= \frac{CnDT}{4}\frac{1}{\sqrt{nT}} = \frac{C}{4}D\sqrt{nT}
\end{align*}
Thus, we have shown that $R_T(\mathcal{A}, \theta^*) \ge \Omega(D\sqrt{nT})$.
\end{proof}

\section{Additional Experiments}

\subsection{Effect of the Regularization Parameter}\label{appendix:lambda}
The choice of $\lambda$ does not impact our regret bounds as its effect diminishes asymptotically with $T\to \infty$. This can be seen through the posterior update formulas for the normal-normal Bayesian linear regression model. The posterior precision and mean are given by 
\begin{align*}
    \Sigma_T^{-1} &= \lambda \mbf{I}_D + \frac{1}{\sigma^2} \sum_{t=1}^T \mbf{H}_t^T \mbf{H}_t \\
    \mu_T &= \left(\lambda \mbf{I}_D + \frac{1}{\sigma^2} \sum_{t=1}^T \mbf{H}_t^T \mbf{H}_t\right)^{-1} \left(\frac{1}{\sigma^2} \sum_{t=1}^T \mbf{H}_t^T \mbf{r_t}\right)
\end{align*}

As the amount of data increases, $\sum_{t=1}^T \mbf{H}_t^\top \mbf{H}_t$ ``swamps out'' the constant term $\lambda \mbf{I}_d$ \citep{Vaart_1998}.

However, the regularization parameter can have a significant impact on regret in the early stages of the algorithm. In Figure \ref{fig:lambda}, we show the impact of $\lambda$ on regret for the grouped model of Section 3.2.2. A large $\lambda$ slows learning as the data struggles to overcome the weight of the prior, while a small $\lambda$ can cause the algorithm to overfit initially, causing increased regret. We see in Figure \ref{fig:lambda} that $\lambda=1$ appears optimal. 

\begin{figure}[htbp]
    \centering
    \includegraphics[width=0.5\linewidth]{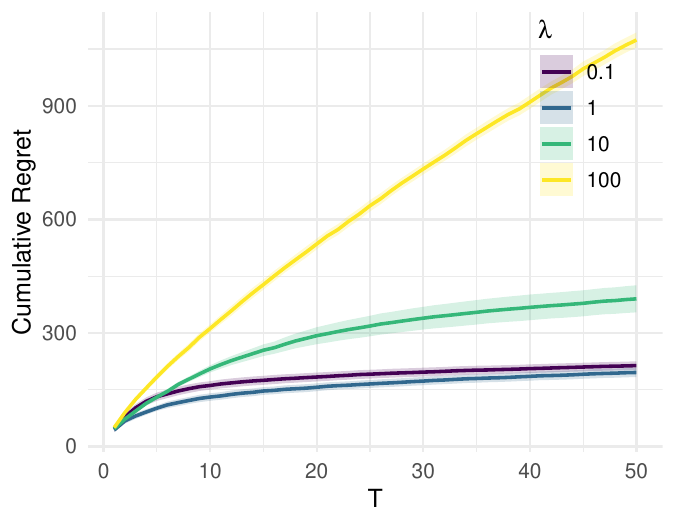}
    \caption{Impact of $\lambda$ on regret for the grouped effects model.}
    \label{fig:lambda}
\end{figure}

\section{Computing Resources}
 Figures were produced on a high-performance computing cluster using 50-75 CPUs in parallel, each assigned 1GB of RAM. The CPUs used were Intel(R) Xeon(R) Gold 6336Y. Gurobi software was used for all simulations. The authors received an academic license for free by registering through the Gurobi website. 

\end{document}